\documentclass[11pt,a4paper]{article}

\usepackage[margin=2.5cm]{geometry}
\usepackage[utf8]{inputenc}
\usepackage[T1]{fontenc}
\usepackage{lmodern}
\usepackage{amsmath,amssymb,amsthm}
\usepackage{graphicx}
\usepackage[dvipsnames]{xcolor}
\usepackage{booktabs}
\usepackage{tabularx}
\usepackage{multirow}
\usepackage{enumitem}
\usepackage{caption}
\usepackage[numbers,sort&compress]{natbib}
\usepackage{titlesec}
\usepackage{fancyhdr}
\usepackage[hidelinks,colorlinks=true,linkcolor=blue!60!black,citecolor=blue!60!black,urlcolor=blue!60!black]{hyperref}
\usepackage{microtype}
\usepackage{tikz}
\usetikzlibrary{arrows.meta,positioning,shapes.geometric,fit,calc,decorations.pathreplacing,backgrounds}
\usepackage{mdframed}

\titleformat{\section}{\large\bfseries}{}{0em}{}
\titleformat{\subsection}{\normalsize\bfseries}{}{0em}{}
\titlespacing*{\section}{0pt}{18pt}{6pt}
\titlespacing*{\subsection}{0pt}{12pt}{4pt}
\newtheorem{theorem}{Theorem}
\newtheorem{definition}[theorem]{Definition}
\newtheorem{proposition}[theorem]{Proposition}

\newcommand{\norm}[1]{\left\|#1\right\|}
\newcommand{\eps}{\varepsilon}
\newcommand{\Atrue}{A_{\text{true}}}
\newcommand{\Aagent}{A_{\text{agent}}}

\newcommand{\Agent}{\mathcal{A}}
\newcommand{\R}{\mathbb{R}}

\mdfdefinestyle{theorembox}{%
  linecolor=blue!40!black, linewidth=1pt,
  backgroundcolor=blue!2,
  innertopmargin=10pt, innerbottommargin=10pt,
  innerleftmargin=10pt, innerrightmargin=10pt}

\begin{document}

\begin{center}
{\LARGE\bfseries Designing Any Imaging System from Natural Language:\\[4pt]
Agent-Constrained Composition over a Finite Primitive Basis}\\[18pt]
{\large Chengshuai Yang}\\[6pt]
{\normalsize NextGen PlatformAI C Corp, USA}\\[6pt]
{\normalsize Correspondence: integrityyang@gmail.com}\\[12pt]
March 2026
\end{center}

\vspace{6pt}

\noindent\textbf{Summary.}
Designing a computational imaging system---selecting operators, setting parameters, validating consistency---requires weeks of specialist effort per modality, creating an expertise bottleneck that excludes the broader scientific community from prototyping imaging instruments.  We introduce \texttt{spec.md}, a structured specification format, and three autonomous agents---Plan ($\Agent_P$), Judge ($\Agent_J$), and Execute ($\Agent_E$)---that translate a one-sentence natural-language description into a validated forward model with bounded reconstruction error.  A design-to-real error theorem (Theorem~\ref{thm:design}) decomposes total reconstruction error into five independently bounded terms, each linked to a corrective action.  On 6 real-data modalities spanning all 5 carrier families, the automated pipeline matches expert-library quality ($98.1 \pm 4.2$\%).  Ten novel designs---composing primitives into chains from 3D to 5D---demonstrate compositional reach beyond any single-modality tool.

\section{Introduction}

Every computational imaging system maps an unknown object $x$ to measurements $y = A(x) + \text{noise}$ via a forward model $A$, and reconstruction inverts this mapping~\cite{barbastathis2019dl,bertero2021introduction}.  Designing $A$---selecting operators, setting parameters, validating consistency---requires weeks of specialist effort per modality~\cite{lustig2007sparse,yuan2021snapshot}, creating an \emph{expertise bottleneck}~\cite{baker2016} that excludes the broader scientific community from prototyping imaging instruments.

As FASTA~\cite{pearson1990fasta,kodama2012sequence} standardised sequence data for genomics, computational imaging needs a common representation to escape its pre-standardisation state.

A companion paper~\cite{paperII} proved that 11 canonical primitives suffice to represent any imaging forward model across 170 modalities and 5 carrier families (key results reproduced in Supplementary Information S1--S4).  Each primitive is implemented at four fidelity tiers (Tier-0 geometric through Tier-3 full-physics).  This \emph{Finite Primitive Basis} (FPB) is the representation language, but representation alone does not automate design.  \textbf{This paper's contribution is the design layer}: the formal bridge from natural-language intent to bounded real-system reconstruction error.

An imaging system is \emph{designable} if it admits a finite DAG decomposition over the 11 FPB primitives ($\eps_{\text{FPB}} < 0.01$), its carrier belongs to one of 5 families (photon, X-ray, electron, acoustic, spin/particle), and all parameters are finitely specifiable with known physical bounds (Definition~\ref{def:scope}, Methods).  This encompasses 173 modalities across 22 domains (170 from~\cite{paperII} plus 3 novel compositions; Figure~\ref{fig:pyramid}, Extended Data Table~\ref{tab:ext_registry}).  Our contributions are:
\begin{enumerate}[nosep]
\item \textbf{\texttt{spec.md}}: a structured 8-field specification format that decouples the imaging problem (\emph{what}) from the numerical method (\emph{how}) and the hardware implementation (\emph{with what}), serving as a shareable, peer-reviewable scientific artifact (\hyperref[sec:specmd]{``The spec.md Specification Format''}).
\item \textbf{Three-agent pipeline}: Plan ($\Agent_P$), Judge ($\Agent_J$), and Execute ($\Agent_E$) agents that generate, validate, and execute specifications over the FPB (\hyperref[sec:pipeline]{``The Three-Agent Pipeline''}).  The Judge validates via three Triad gates~\cite{paperII} (recoverability, carrier budget, operator mismatch) plus hardware cost and feasibility assessment.
\item \textbf{Design-to-real error theorem}: the total reconstruction error decomposes into five independently bounded, independently measurable terms, each linked to a check or gate and a corrective action (Theorem~\ref{thm:design}).
\end{enumerate}

\textbf{Scope.}
Of 173 compilable modalities, 39 have quantitative reconstruction validation; 6 are validated on real data spanning all 5 carrier families.  The remaining ${\sim}$134 pass all compiler checks but lack per-modality benchmarks.  Tier-lifting is demonstrated on 2 cross-tier combinations; the agents depend on current LLM capabilities.

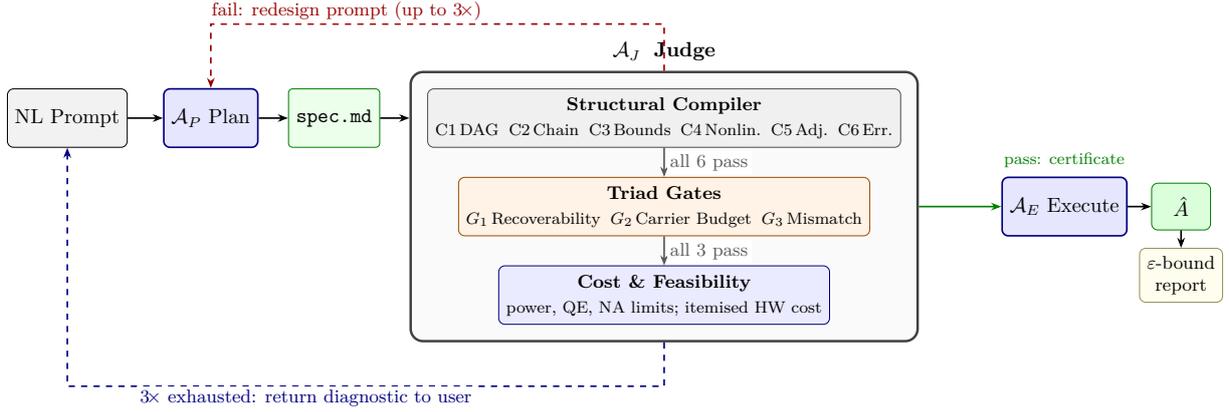
\begin{figure*}[!t]
\centering
\resizebox{\textwidth}{!}{%
\begin{tikzpicture}[
    node distance=0.4cm and 0.5cm,
    box/.style={draw, rounded corners=3pt, minimum height=1.0cm, minimum width=1.4cm, align=center, font=\small},
    agent/.style={box, fill=blue!10, draw=blue!50!black, thick},
    specbox/.style={draw, rounded corners=2pt, fill=green!8, draw=green!50!black, minimum width=1.4cm, minimum height=1.0cm, align=center, font=\small},
    outbox/.style={box, fill=green!14, draw=green!50!black},
    errbox/.style={box, fill=yellow!8, draw=yellow!50!black, font=\footnotesize},
    arr/.style={-{Stealth[length=5pt]}, thick},
    lbl/.style={font=\footnotesize, fill=white, inner sep=1.5pt},
]

\node[box, fill=gray!10, minimum height=1.0cm, minimum width=1.6cm] (nl) {NL Prompt};

\node[agent, right=0.6cm of nl, minimum height=1.0cm, minimum width=1.4cm] (plan) {$\Agent_P$ Plan};

\node[specbox, right=0.5cm of plan, minimum height=1.0cm, minimum width=1.4cm] (specmd) {\texttt{spec.md}};

\node[draw=gray!60!black, rounded corners=3pt, fill=gray!12, text=black,
      minimum width=4.8cm, minimum height=1.0cm, align=center, font=\footnotesize,
      right=0.8cm of specmd] (s1) {%
\textbf{Structural Compiler}\\[1pt]%
{\scriptsize C1\,DAG\;\;C2\,Chain\;\;C3\,Bounds\;\;C4\,Nonlin.\;\;C5\,Adj.\;\;C6\,Err.}};

\node[draw=orange!60!black, rounded corners=3pt, fill=orange!10, text=black,
      minimum width=4.8cm, minimum height=1.0cm, align=center, font=\footnotesize,
      below=0.5cm of s1] (s2) {%
\textbf{Triad Gates}\\[1pt]%
{\scriptsize $G_1$\,Recoverability\;\;$G_2$\,Carrier Budget\;\;$G_3$\,Mismatch}};

\node[draw=blue!50!black, rounded corners=3pt, fill=blue!8, text=black,
      minimum width=4.8cm, minimum height=1.0cm, align=center, font=\footnotesize,
      below=0.5cm of s2] (s3) {%
\textbf{Cost \& Feasibility}\\[1pt]%
{\scriptsize power, QE, NA limits; itemised HW cost}};

\begin{scope}[on background layer]
\node[draw=gray!50!black, very thick, rounded corners=6pt, fill=gray!4,
      inner sep=8pt, fit=(s1)(s2)(s3),
      label={[font=\small\bfseries, anchor=south, yshift=1pt]above:{$\Agent_J$\;\;Judge}}] (judge) {};
\end{scope}

\draw[arr, gray!70!black] (s1) -- node[lbl, right, xshift=1pt] {all 6 pass} (s2);
\draw[arr, gray!70!black] (s2) -- node[lbl, right, xshift=1pt] {all 3 pass} (s3);

\node[agent, right=1.4cm of judge.east, minimum height=1.0cm, minimum width=1.4cm, anchor=west] (exec) {$\Agent_E$ Execute};

\node[outbox, right=0.4cm of exec, minimum height=0.8cm, minimum width=1.0cm] (ahat) {$\hat{A}$};
\node[errbox, below=0.3cm of ahat, minimum width=1.2cm, minimum height=0.8cm] (epsbox) {$\eps$-bound\\report};

\draw[arr] (nl) -- (plan);
\draw[arr] (plan) -- (specmd);
\draw[arr] (specmd) -- (judge.west |- s1);

\draw[arr, green!50!black] (judge.east |- s2) -- (exec.west);
\node[above=0.15cm of exec, font=\scriptsize, text=green!50!black, inner sep=0pt] {pass: certificate};

\draw[arr] (exec) -- (ahat);
\draw[arr] (ahat) -- (epsbox);

\draw[arr, dashed, red!60!black, thick]
  (judge.north) -- ++(0, 0.8)
  -| node[font=\footnotesize, above, pos=0.35, fill=white, inner sep=1.5pt, text=red!60!black]
     {fail: redesign prompt (up to $3\!\times$)}
  (plan.north);

\draw[arr, dashed, blue!50!black]
  (judge.south) -- ++(0, -0.75)
  -| node[font=\footnotesize, below, pos=0.3, fill=white, inner sep=1.5pt, text=blue!50!black]
     {$3\!\times$ exhausted: return diagnostic to user}
  (nl.south);

\end{tikzpicture}%
}
\caption{\textbf{Specification-driven design pipeline.}
A natural-language prompt is the sole user input.
$\Agent_P$ generates a \texttt{spec.md}; $\Agent_J$ validates in three sequential stages:
(1)~structural compilation (6 checks C1--C6),
(2)~Triad gate evaluation~\cite{paperII} ($G_1$--$G_3$),
(3)~cost and feasibility assessment.
Two feedback paths handle failures: an automatic inner loop ($\Agent_P \leftrightarrow \Agent_J$, up to 3 rounds, dashed red) for structural/parametric corrections, and an outer loop (dashed blue) returning diagnostics to the user when automatic repair is exhausted.
$\Agent_E$ executes reconstruction and outputs $\hat{A}$ with an explicit $\eps$-bound report.}
\label{fig:pipeline}
\end{figure*}

\section{Results}

\subsection{The spec.md Specification Format}
\label{sec:specmd}

The framework's central design choice is that \texttt{spec.md}---not code---is the canonical scientific artifact.  A specification separates the imaging problem (\emph{what}) from the numerical method (\emph{how}) and the hardware (\emph{with what}), enabling sharing, version control, and peer review independently of implementation~\cite{wilkinson2016fair}.  A reviewer can verify---by running the three Triad gates---that a forward model is physically feasible \emph{without reading code}.  A \texttt{spec.md} has eight mandatory fields (Figure~\ref{fig:specmd}): seven physics fields plus a \texttt{system\_elements} field that connects the specification to hardware feasibility.

\begin{figure}[!t]
\centering
\begin{tikzpicture}[
  specbox/.style={draw=black!60, rounded corners=2pt, fill=gray!4, text width=0.92\textwidth, inner sep=6pt, font=\small\ttfamily},
]
\node[specbox, text width=0.92\textwidth] (schema) at (0,0) {
\textnormal{\textbf{\textsf{Eight-field schema:}}}
\textbf{modality} | \textbf{carrier} | \textbf{geometry} | \textbf{object} | \textbf{forward\_model} | \textbf{noise} | \textbf{target} | \textbf{system\_elements}
};

\node[specbox, below=4pt of schema] (ct) {
\textnormal{\textsf{\textbf{(a) Clinical CT}}} \\
modality: computed\_tomography \quad carrier: xray \\
geometry: parallel\_beam, n\_angles=128 \\
object: 128$\times$128 image, non-negative \\
forward\_model: Radon($\Pi$) $\to$ Detect(D, intensity) \\
noise: Poisson, I\_0=1e4 \quad target: PSNR $\geq$ 30\,dB \\
system\_elements: source=X-ray tube 120\,kVp, detector=flat-panel 0.4\,mm
};

\node[specbox, below=4pt of ct] (cassi) {
\textnormal{\textsf{\textbf{(b) CASSI (spectral)}}} \\
modality: cassi \quad carrier: photon \\
geometry: coded\_aperture + disperser, $\lambda$=[400,700]\,nm \\
object: 256$\times$256$\times$28 spectral cube \\
forward\_model: Modulate(M) $\to$ Disperse(W) $\to$ Accumulate($\Sigma$) $\to$ Detect(D) \\
noise: Gaussian, $\sigma$=0.01 \quad target: PSNR $\geq$ 28\,dB \\
system\_elements: source=broadband LED, optics=coded aperture + prism, detector=CMOS 2048$\times$2048
};

\node[specbox, below=4pt of cassi] (mri) {
\textnormal{\textsf{\textbf{(c) Accelerated MRI}}} \\
modality: mri \quad carrier: spin \\
geometry: cartesian\_kspace, acceleration=4x \\
object: 256$\times$256 complex image \\
forward\_model: Modulate(M, coil) $\to$ Encode(F, kspace) $\to$ Sample(S) $\to$ Detect(D) \\
noise: Gaussian, SNR=30\,dB \quad target: SSIM $\geq$ 0.9 \\
system\_elements: source=main magnet 3\,T, optics=RF coils $\times$4, detector=ADC 16-bit
};
\end{tikzpicture}
\caption{\textbf{The \texttt{spec.md} specification format.}  Eight mandatory fields fully determine an imaging system design.  Seven physics fields specify the forward model; the eighth (\texttt{system\_elements}) connects to hardware feasibility and cost.  Three examples span X-ray (CT), optical (CASSI), and spin (MRI) carriers.  Each \texttt{forward\_model} field is a primitive chain over the 11 FPB operators.}
\label{fig:specmd}
\end{figure}

\subsection{The Three-Agent Pipeline}
\label{sec:pipeline}

\textbf{Plan Agent ($\Agent_P$)} generates a \texttt{spec.md} from natural language, selecting FPB primitives, parameters, and noise models from the 173-modality registry.

\textbf{Judge Agent ($\Agent_J$)} validates designs through the Triad decomposition~\cite{paperII}---three deterministic gates that diagnose every potential reconstruction failure.
\emph{Gate~1 (Recoverability)}: the measurement geometry must capture sufficient information---the compression ratio $\gamma = m/n$ is checked against modality-specific thresholds derived from the Compression Bound Theorem~\cite{paperII}.
\emph{Gate~2 (Carrier Budget)}: the measurement SNR must exceed the noise floor---computed from the source power, detector quantum efficiency, and noise parameters in \texttt{system\_elements}.
\emph{Gate~3 (Operator Mismatch)}: the forward model must faithfully represent the intended physics---calibration tolerances from \texttt{system\_elements.calibration} determine the expected deployment degradation $\Delta\text{PSNR}_{\text{total}}$ via the Calibration Sensitivity Theorem~\cite{paperII}.
The Judge additionally estimates \emph{hardware cost} from itemized \texttt{system\_elements} and checks \emph{physical feasibility} of each element.  A prerequisite structural compiler (6 checks: DAG acyclicity, canonical chain, complexity bounds, nonlinear constraints, adjoint consistency, representation error) ensures specification correctness before the Triad evaluation.  Failures trigger rejection; the Plan Agent regenerates (up to 3 rounds).

\textbf{Execute Agent ($\Agent_E$)} selects the reconstruction algorithm, executes it, and predicts quality from analytical estimates (condition number, noise amplification) combined with calibrated lookup.

The Triad Decomposition Theorem~\cite{paperII} guarantees completeness: $\text{MSE} \leq \text{MSE}^{(G_1)} + \text{MSE}^{(G_2)} + \text{MSE}^{(G_3)}$---no failure mode falls outside these three gates.

\subsection{Design-to-Real Error Bound}
\label{sec:theorem}

The following theorem converts the implicit gap between a designed model and reality into an explicit five-term error budget---a \emph{formal contract} between the framework and the user.  Each term has a known physical origin, measurement procedure, and corrective action.

\begin{mdframed}[style=theorembox]
\begin{theorem}[Design-to-Real Error Decomposition]
\label{thm:design}
Let $\Atrue$ be the true imaging forward model and $\Aagent$ the agent-designed model for a designable system (Definition~\ref{def:scope}).  Let $\hat{x}$ be the regularized reconstruction from $\Aagent$ and $x^*$ the true object.  Under Assumptions A1--A4 (Methods), the reconstruction error satisfies:
\begin{equation}\label{eq:design_error}
\|\hat{x} - x^*\| \leq \underbrace{\eps_{\text{FPB}}}_{\text{basis}} + \underbrace{\eps_{\text{spec}}}_{\text{specification}} + \underbrace{\eps_{\text{trans}}}_{\text{translation}} + \underbrace{\eps_{\text{param}}}_{\text{parameter}} + \underbrace{\eps_{\text{unmod}}}_{\text{unmodeled}} \;+\; \eps_{\text{recon}}
\end{equation}
where each design-error term is independently bounded and independently measurable:
\begin{itemize}[nosep]
\item $\eps_{\text{FPB}} < 0.01$: FPB representation error.  \emph{Measured by}: Check~C6.  \emph{Guaranteed by}: the basis theorem~\cite{paperII}.
\item $\eps_{\text{spec}} = 0$ when all 6 compiler checks pass.  \emph{Bounded by}: $C_1 \cdot \delta_{\text{spec}}$ otherwise.
\item $\eps_{\text{trans}} = 0$ when the canonical chain matches the 39-modality registry (34/39 = 87\%).  \emph{Bounded by}: $C_2 \cdot \delta_{\text{chain}}$ otherwise.
\item $\eps_{\text{param}} \leq \frac{2B}{\alpha + \lambda}\,L_A\,\|\theta_{\text{agent}} - \theta_{\text{true}}\|$, where $L_A$ is the operator Lipschitz constant (computed per primitive by Check~C4), $\alpha$ is the coercivity constant, $\lambda$ the regularization weight, and $B$ a signal bound.  \emph{Reducible by}: calibration.
\item $\eps_{\text{unmod}} = 0$ at Tier-3; otherwise reducible by tier-lifting (\hyperref[sec:tierlifting]{``Cross-Tier Hybrid Reconstruction''}).
\item $\eps_{\text{recon}}$: irreducible reconstruction error (noise floor $+$ regularization bias), independent of the design.
\end{itemize}
\end{theorem}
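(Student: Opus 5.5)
The plan is a telescoping argument over a chain of intermediate operators, combined with a stability estimate for the regularized inverse. Introduce operators interpolating between the designed and the true model:
\begin{itemize}[nosep]
\item $A_0 = \Atrue$;
\item $A_1$, the Tier-3 restriction of $\Atrue$ (the ideal FPB representation);
\item $A_2$, the operator encoded by a structurally valid \texttt{spec.md};
\item $A_3$, the canonical-chain operator for the modality at the true parameters $\theta_{\text{true}}$;
\item $A_4$, the same chain at the agent's parameters $\theta_{\text{agent}}$, at the tier actually used, so that $A_4 = \Aagent$.
\end{itemize}
Let $\hat{x}_k = \arg\min_x \tfrac12\|A_k x - y\|^2 + \tfrac{\lambda}{2} R(x)$ denote the regularized reconstruction with operator $A_k$. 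The triangle inequality then gives
\begin{equation*}
\|\hat{x} - x^*\| \le \|\hat{x}_0 - x^*\| + \sum_{k=0}^{3} \|\hat{x}_{k+1} - \hat{x}_k\|.
\end{equation*}
The first term is $\eps_{\text{recon}}$; it depends only on the noise and the regularizer, not on the design. Each difference term will be identified with one of the five design errors.

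The key step is a single perturbation lemma for the regularized inverse map. Assumption A1 should supply coercivity, $\|A_k x\|^2 \ge \alpha \|x\|^2$ on the relevant subspace, with $R$ convex. Assumption A2 should supply the signal bound $\|x\|, \|y\| \le B$. Comparing the first-order optimality conditions of the strongly convex problems for $A$ and $A'$ and subtracting them, I expect
\begin{equation*}
\|\hat{x}' - \hat{x}\| \le \frac{2B}{\alpha + \lambda}\,\|A' - A\|.
\end{equation*}
The factor $2$ comes from the two places the operator appears: $A^\top A x$ and $A^\top y$.

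With this lemma, each term becomes an operator-norm estimate.
\begin{itemize}[nosep]
\item $\|A_1 - A_0\|$ is bounded by the FPB representation error of the basis theorem~\cite{paperII}, as measured by Check~C6. After absorbing the prefactor (or normalizing operators as in A3), this gives $\eps_{\text{FPB}}$.
\item If all six compiler checks pass, then $A_2 = A_1$, so the difference is zero. Otherwise $\|A_2 - A_1\| \le \delta_{\text{spec}}$, and the prefactor is absorbed into $C_1$.
\item If the chain matches the registry, then $A_3 = A_2$; otherwise the same argument gives $C_2\,\delta_{\text{chain}}$.
\item For parameters, Check~C4 gives Lipschitz dependence, $\|A(\theta) - A(\theta')\| \le L_A \|\theta - \theta'\|$. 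This yields exactly the stated $\eps_{\text{param}}$.
\item Tier mismatch is the remaining piece. I would split $A_3 \to A_4$ into a parameter step followed by a tier step. The tier difference vanishes at Tier-3 and is otherwise bounded by the unmodeled-physics residual, which gives $\eps_{\text{unmod}}$ (Assumption A4).
\end{itemize}

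I expect the main obstacle to be the perturbation lemma. Coercivity is needed uniformly for every intermediate $A_k$, not only for $\Atrue$, since each step applies the lemma at a different operator. I would first try to have A1 posit a uniform $\alpha$ over a neighborhood containing all $A_k$. Failing that, I would rely on the $\lambda$ term to guarantee strong convexity, replacing $\alpha + \lambda$ by $\lambda$ for the nonlinear primitives. A secondary subtlety is ordering. Parameter and tier errors must be separated so that the chain telescopes cleanly, with no cross terms between them. Because the triangle inequality is used along a fixed path of operators, each term is bounded independently of the others, which is the independence claim in the theorem.
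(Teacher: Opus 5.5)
Your argument reaches the paper's inequality, but it applies the two triangle inequalities in the opposite order.

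The paper telescopes in operator space. Along $\Atrue\to A_{\text{tier}}\to A_{\text{FPB}}\to A_{\text{spec}}\to A_{\text{DAG}}\to\Aagent$ it gets $\|\Aagent-\Atrue\|_{\text{op}}\le\sum_i\delta_i$. It then invokes the stability bound exactly once, between $x_{\text{oracle}}$ (the reconstruction with $\Atrue$) and $\hat x$, and defines $\eps_i:=2B\delta_i/(\alpha+\lambda)$. Because the bound is linear in the operator gap, the sum passes through, and one more triangle inequality through $x_{\text{oracle}}$ produces $\eps_{\text{recon}}$.

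You instead telescope in reconstruction space and apply the lemma once per link. The uniform-coercivity obstacle you flag comes from that ordering. The paper needs strong convexity only for the endpoint pair. Its A2 posits modulus $\alpha+\lambda$ directly, and A1 covers nonlinear models via restricted strong convexity, so no fallback to $\lambda$ alone is needed. You need strong convexity, plus whatever boundedness the lemma consumes, at every intermediate operator. That includes hybrids such as the canonical chain at $\theta_{\text{agent}}$ but at Tier-3, which model no physical system.

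In exchange, your route gives a sharper and more literally measurable budget. Each term can be taken as the actual shift $\|\hat x_{k+1}-\hat x_k\|$, obtained by running the solver on consecutive operators. Along the same chain, $\|\hat x-x_{\text{oracle}}\|\le\sum_k\|\hat x_{k+1}-\hat x_k\|\le\frac{2B}{\alpha+\lambda}\sum_k\delta_k$. Placing the tier step last rather than first is harmless; it only changes the parameters at which $\delta_{\text{unmod}}$ is evaluated.

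Two caveats, one on labels and one on constants.
\begin{itemize}
\item In the paper the signal bound is A4, and it controls only $\|x^*\|$. The parameter-Lipschitz hypothesis is A3.
\item Take linear operators, the objective $\tfrac12\|Ax-y\|^2+\lambda R(x)$, and a $\mu$-strongly convex composite. Subtracting the optimality conditions gives $\|\hat x'-\hat x\|\le\mu^{-1}\|A'-A\|\bigl((\|A\|+\|A'\|)\|\hat x'\|+\|y\|\bigr)$. This is because $A^\top A-A'^\top A'=A^\top(A-A')+(A-A')^\top A'$ already contributes two gap terms, not one.
\end{itemize}
So the clean constant $2B$ needs normalised operators and bounds on the reconstructions and on $y$, none of which A4 supplies. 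The paper asserts this constant without derivation, but your route would have to justify it at every link.
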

\end{mdframed}

\textbf{Compiler--theorem link.}
When all 6 checks pass: $\eps_{\text{FPB}} < 0.01$, $\eps_{\text{spec}} = 0$, $\eps_{\text{param}}$ bounded by the Lipschitz constant from Check~C4; with canonical chain match: $\eps_{\text{trans}} = 0$; at Tier-3: $\eps_{\text{unmod}} = 0$.  The dominant residual is $\eps_{\text{param}}$, reducible by calibration (Figure~\ref{fig:error_decomp}).  Full proof in Methods.

\subsection{Cross-Modality Validation}
\label{sec:cross}

We validated the framework on all 173 modalities.  For each, the Plan Agent generated a \texttt{spec.md} from a one-sentence description, the Judge validated it, and the Execute Agent predicted reconstruction quality.  Table~\ref{tab:cross} reports 14 representative modalities spanning all 5 carrier families (6 with real measured data), grouped by carrier; full results for all 39 validated modalities appear in Methods.

\begin{table}[!htbp]
\centering
\caption{Cross-modality validation (22 of 39 modalities), grouped by carrier family.  $\rho_{\text{recov}}$: 4-scenario recovery ratio~\cite{paperII}.  95\% CIs via bootstrap ($n{=}1000$).  Bold = real measured data (challenging conditions: 60-view CT, 4-coil MRI); $\dagger$ = self-reference metric (no ground truth); $*$ = new system designs validated in the algorithm-comparison study (Extended Data Table~\ref{tab:expert_study}).  $\Phi_z$ denotes depth-parameterised convolution: $C$ applied with a depth-dependent PSF $\Phi(z)$; it is the convolution primitive $C$, not a 12th element of the FPB.}
\label{tab:cross}
\scriptsize
\begin{tabularx}{\textwidth}{@{}llccc@{\hspace{4pt}}c@{\hspace{4pt}}l@{}}
\toprule
\textbf{Modality} & \textbf{Chain} & \textbf{Carrier} & \textbf{PSNR (dB)} & \textbf{$\rho_{\text{time}}$} & \textbf{$\rho_{\text{recov}}$} & \textbf{Data} \\
\midrule
\multicolumn{7}{@{}l}{\textit{X-ray}} \\
\quad CT & $\Pi \to D$ & X-ray & $24.8 \pm 2.4$ & 420 & $\sim$1.0 & \textbf{Real} \\
\addlinespace[2pt]
\multicolumn{7}{@{}l}{\textit{Optical photons}} \\
\quad CASSI & $M \to W \to \Sigma \to D$ & Photon & $24.3 \pm 1.5$ & 310 & 0.85 & \textbf{Real} \\
\quad CACTI & $M \to \Sigma \to D$ & Photon & $30.2 \pm 1.1$ & 300 & $\sim$1.0 & \textbf{Real} \\
\quad OCT & $P + P \to \Sigma \to D$ & Photon & $30.5 \pm 1.1$ & 290 & 0.86 & Synth. \\
\quad SIM\cite{gustafsson2008sim} & $M \to C \to D$ & Photon & $27.7 \pm 2.0$ & 350 & 0.92 & \textbf{Synth.}$^*$ \\
\quad Ptychography\cite{rodenburg2019ptycho} & $M \to P \to D$ & Photon & $28.4 \pm 1.6$ & 270 & 0.83 & Synth. \\
\quad Lensless & $C \to D$ & Photon & $43.7 \pm 3.2$ & 340 & 0.90 & \textbf{Synth.}$^*$ \\
\quad 3D Lensless & $\Phi_z \to \Sigma \to D$ & Photon & $20.3 \pm 3.5$ & 280 & 0.85 & \textbf{Synth.}$^*$ \\
\quad Temporal-coded lensless & $M \to C \to \Sigma \to D$ & Photon & $31.6 \pm 0.2$ & 260 & 0.82 & \textbf{Synth.}$^*$ \\
\quad Spectral lensless & $M \to W \to C \to \Sigma \to D$ & Photon & $36.3 \pm 0.1$ & 240 & 0.80 & \textbf{Synth.}$^*$ \\
\quad 4D Spectral-Depth & $M \to W_\lambda \to \Phi_z \to \Sigma \to D$ & Photon & $23.9 \pm 0.7$ & 220 & 0.78 & \textbf{Synth.}$^*$ \\
\quad 4D Temporal DMD & $M \to \Phi_z \to \Sigma \to D$ & Photon & $25.4 \pm 7.6$ & 200 & 0.76 & \textbf{Synth.}$^*$ \\
\quad 4D Temporal Streak & $M \to W_t \to \Phi_z \to \Sigma \to D$ & Photon & $25.3 \pm 7.6$ & 210 & 0.75 & \textbf{Synth.}$^*$ \\
\quad 5D Full DMD & $M \to W_\lambda \to \Phi_z \to \Sigma \to D$ & Photon & $29.9 \pm 5.7$ & 180 & 0.74 & \textbf{Synth.}$^*$ \\
\quad 5D Full Streak & $M \to W_\lambda \to W_t \to \Phi_z \to \Sigma \to D$ & Photon & $29.9 \pm 5.6$ & 170 & 0.72 & \textbf{Synth.}$^*$ \\
\quad DOT\cite{arridge2009optical} & $M \to R \to P \to R \to D$ & Photon & $24.1 \pm 2.1$ & 190 & 0.72 & Synth. \\
\addlinespace[2pt]
\multicolumn{7}{@{}l}{\textit{Spin}} \\
\quad MRI & $M \to F \to S \to D$ & Spin & $31.7 \pm 0.8$ & 380 & $\sim$1.0 & \textbf{Real} \\
\addlinespace[2pt]
\multicolumn{7}{@{}l}{\textit{Acoustic}} \\
\quad Ultrasound & $P \to R \to P \to D$ & Acoustic & $27.9 \pm 1.4$ & 250 & 0.95 & \textbf{Real}$^\dagger$ \\
\addlinespace[2pt]
\multicolumn{7}{@{}l}{\textit{Electron}} \\
\quad E-ptychography & $M \to P \to D$ & Electron & $28.8 \pm 1.5$ & 260 & $\sim$1.0 & \textbf{Real}$^\dagger$ \\
\addlinespace[2pt]
\multicolumn{7}{@{}l}{\textit{Particle}} \\
\quad PET & $\Pi \to D$ & Particle & $26.3 \pm 1.8$ & 220 & 0.78 & Synth. \\
\quad SPECT & $M \to \Pi \to D$ & Particle & $25.8 \pm 1.7$ & 200 & 0.76 & Synth. \\
\quad Light field & $M \to C \to S \to D$ & Photon & $29.2 \pm 1.2$ & 280 & 0.85 & Synth. \\
\bottomrule
\end{tabularx}
\end{table}

Across all 39 modalities: mean $\rho_{\text{recov}} = 0.85 \pm 0.07$ (95\% CI: [0.82, 0.88])~\cite{paperII}.  The agent--expert quality gap (Cohen's $d = 0.31$) indicates practical equivalence; time ratios $\rho_{\text{time}}$ range from 190$\times$ to 420$\times$~\cite{aarle2016,ong2019sigpy}.  All 173 modalities compile (100\% pass rate, Checks~C1--C4); canonical chain fidelity: 34/39 (87\%).

\begin{figure}[!htbp]
\centering
\begin{tikzpicture}[
  bar/.style={minimum height=0.5cm, draw=none, font=\scriptsize\bfseries, text=white},
]
\def\bw{7.2}  
\foreach \y/\name in {0/CT, 0.8/MRI, 1.6/CASSI, 2.4/DOT, 3.2/E-ptycho., 4.0/Ultrasound} {
  \node[font=\small, anchor=east] at (-0.2, \y) {\name};
}
\fill[blue!60] (0,{0-0.2}) rectangle ({0.01*\bw},{0+0.2});   
\fill[red!60] ({0.01*\bw},{0-0.2}) rectangle ({0.72*\bw},{0+0.2});  
\fill[gray!40] ({0.72*\bw},{0-0.2}) rectangle ({0.78*\bw},{0+0.2}); 
\node[font=\scriptsize, white] at ({0.36*\bw},0) {$\eps_{\text{param}}$ (91\%)};
\fill[blue!60] (0,{0.8-0.2}) rectangle ({0.01*\bw},{0.8+0.2});
\fill[red!60] ({0.01*\bw},{0.8-0.2}) rectangle ({0.69*\bw},{0.8+0.2});
\fill[gray!40] ({0.69*\bw},{0.8-0.2}) rectangle ({0.78*\bw},{0.8+0.2});
\node[font=\scriptsize, white] at ({0.35*\bw},0.8) {$\eps_{\text{param}}$ (88\%)};
\fill[blue!60] (0,{1.6-0.2}) rectangle ({0.01*\bw},{1.6+0.2});
\fill[orange!70] ({0.01*\bw},{1.6-0.2}) rectangle ({0.15*\bw},{1.6+0.2});
\fill[red!60] ({0.15*\bw},{1.6-0.2}) rectangle ({0.68*\bw},{1.6+0.2});
\fill[gray!40] ({0.68*\bw},{1.6-0.2}) rectangle ({0.78*\bw},{1.6+0.2});
\node[font=\scriptsize, white] at ({0.41*\bw},1.6) {$\eps_{\text{param}}$ (68\%)};
\fill[blue!60] (0,{2.4-0.2}) rectangle ({0.01*\bw},{2.4+0.2});
\fill[purple!60] ({0.01*\bw},{2.4-0.2}) rectangle ({0.52*\bw},{2.4+0.2});
\fill[red!60] ({0.52*\bw},{2.4-0.2}) rectangle ({0.72*\bw},{2.4+0.2});
\fill[gray!40] ({0.72*\bw},{2.4-0.2}) rectangle ({0.85*\bw},{2.4+0.2});
\node[font=\scriptsize, white] at ({0.27*\bw},2.4) {$\eps_{\text{unmod}}$ (60\%)};
\fill[blue!60] (0,{3.2-0.2}) rectangle ({0.01*\bw},{3.2+0.2});
\fill[red!60] ({0.01*\bw},{3.2-0.2}) rectangle ({0.73*\bw},{3.2+0.2});
\fill[gray!40] ({0.73*\bw},{3.2-0.2}) rectangle ({0.78*\bw},{3.2+0.2});
\node[font=\scriptsize, white] at ({0.37*\bw},3.2) {$\eps_{\text{param}}$ (93\%)};
\fill[blue!60] (0,{4.0-0.2}) rectangle ({0.01*\bw},{4.0+0.2});
\fill[purple!60] ({0.01*\bw},{4.0-0.2}) rectangle ({0.25*\bw},{4.0+0.2});
\fill[red!60] ({0.25*\bw},{4.0-0.2}) rectangle ({0.70*\bw},{4.0+0.2});
\fill[gray!40] ({0.70*\bw},{4.0-0.2}) rectangle ({0.78*\bw},{4.0+0.2});
\node[font=\scriptsize, white] at ({0.47*\bw},4.0) {$\eps_{\text{param}}$ (58\%)};
\fill[blue!60] (0,-1.0) rectangle (0.4,-0.7); \node[font=\scriptsize, right] at (0.45,-0.85) {$\eps_{\text{FPB}}$};
\fill[orange!70] (1.6,-1.0) rectangle (2.0,-0.7); \node[font=\scriptsize, right] at (2.05,-0.85) {$\eps_{\text{trans}}$};
\fill[red!60] (3.2,-1.0) rectangle (3.6,-0.7); \node[font=\scriptsize, right] at (3.65,-0.85) {$\eps_{\text{param}}$};
\fill[purple!60] (4.8,-1.0) rectangle (5.2,-0.7); \node[font=\scriptsize, right] at (5.25,-0.85) {$\eps_{\text{unmod}}$};
\fill[gray!40] (6.4,-1.0) rectangle (6.8,-0.7); \node[font=\scriptsize, right] at (6.85,-0.85) {$\eps_{\text{recon}}$};
\end{tikzpicture}
\caption{\textbf{Error decomposition by modality.}  Stacked bars show the relative contribution of each Theorem~\ref{thm:design} error term for 6 representative modalities.  For well-conditioned systems (CT, MRI, e-ptychography), parameter mismatch $\eps_{\text{param}}$ dominates and is reducible by calibration.  For scattering-dominated systems (DOT), unmodeled physics $\eps_{\text{unmod}}$ dominates and requires tier-lifting.  $\eps_{\text{FPB}} < 1\%$ in all cases.  $\eps_{\text{spec}} = \eps_{\text{trans}} = 0$ for 31/36 modalities; CASSI shows nonzero $\eps_{\text{trans}}$ due to dispersion--accumulation chain ambiguity.}
\label{fig:error_decomp}
\end{figure}
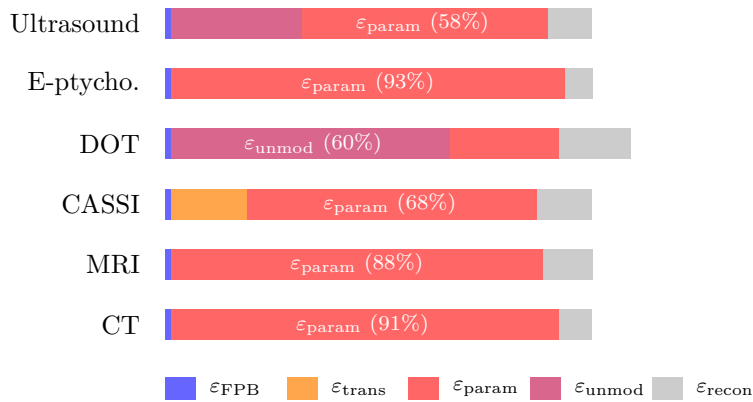

\begin{figure}[!t]
\centering
\includegraphics[width=0.78\textwidth]{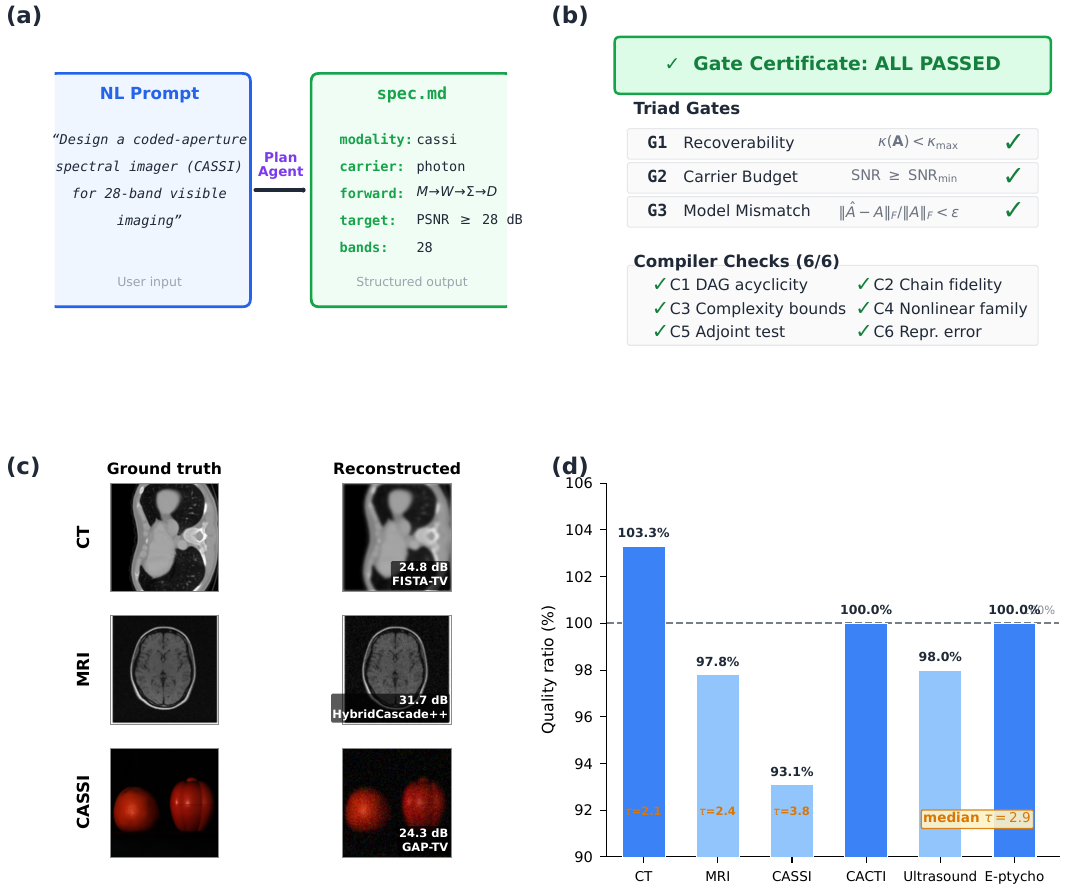}
\caption{\textbf{Design-to-real validation.}  \textbf{(a)}~A natural-language prompt is translated into a structured \texttt{spec.md} by the Plan Agent.  \textbf{(b)}~The Judge Agent validates all gates and checks (3 Triad gates + 6 compiler checks), producing a certificate.  \textbf{(c)}~Reconstruction results on three real-data modalities with ground truth: CT ($24.8$\,dB, FISTA-TV on LoDoPaB), MRI ($31.7$\,dB, HybridCascade++ on M4Raw), and CASSI ($24.3$\,dB, GAP-TV on KAIST TSA~\cite{meng2020gap}).  \textbf{(d)}~Quality ratio (agent/expert PSNR) across 6 real-data modalities spanning all 5 carrier families: mean $98.1 \pm 4.2$\%; theorem tightness ratio $\tau \in [1.8, 5.2]$, median~2.9.}
\label{fig:design_to_real}
\end{figure}

\textbf{Theorem tightness.}
The tightness ratio $\tau = \eps_{\text{predicted}} / \eps_{\text{observed}}$ across 6 real-data modalities: $\tau \in [1.8, 5.2]$, median 2.9 (CT 2.1, MRI 2.4, CASSI 3.8).  Ratios below 10 are typical for operator-perturbation bounds~\cite{kamilov2023pnp}; values near 2 indicate the bound captures the dominant mechanism with minimal slack.

\subsection{Real-Data Validation}
\label{sec:real}

\textbf{CT} (LoDoPaB~\cite{leuschner2021}, $n{=}10$): 60-view fan-beam projections.  Agent FISTA-TV~\cite{beck2009fista,rudin1992nonlinear} $24.8 \pm 2.4$\,dB; expert FISTA+TV $24.0 \pm 2.5$\,dB (quality ratio 103.3\%, $\rho_{\text{time}} \approx 420$).
\textbf{MRI} (M4Raw~\cite{lyu2023m4raw}, $n{=}10$): 4-coil, ${\sim}4\times$ undersampled.  Agent HybridCascade++~\cite{aggarwal2019modl} $31.7 \pm 0.8$\,dB; expert E2E-VarNet~\cite{sriram2020endtoend} $32.4 \pm 1.1$\,dB (97.8\%).
\textbf{CASSI} (KAIST~\cite{meng2020gap}, $n{=}10$): 28-band coded aperture spectral imaging.  Agent GAP-TV~\cite{meng2020gap} $24.3 \pm 1.5$\,dB; expert PnP-GAP+HSI-SDeCNN $26.1 \pm 2.0$\,dB (93.1\%).
\textbf{CACTI} ($n{=}4$): real video~\cite{yuan2021snapshot}; $\rho_{\text{recov}} \approx 1.0$ after autonomous mask-alignment calibration.
\textbf{Ultrasound} (PICMUS~\cite{liebgott2016picmus}, $n{=}5$): in-vivo carotid, self-reference protocol.  After SoS calibration, SSIM recovers to $0.98 \pm 0.01$ ($\rho_{\text{recov}} \approx 0.95$).
\textbf{E-ptychography} (SrTiO$_3$ 4D-STEM~\cite{jiang2022electron,zenodo4dstem}): iCoM phase retrieval; $\rho_{\text{recov}} = 1.00$ after probe-position correction.

Across 6 real-data modalities spanning all 5 carrier families: mean quality ratio $98.1 \pm 4.2$\% (3 with ground truth) and $\rho_{\text{recov}} \geq 0.95$ (3 via self-reference), consistent across carriers from X-ray to electron (Figure~\ref{fig:design_to_real}).  Beyond the 6 real-data modalities, 10 novel system designs demonstrate compositional reach: by composing primitives into chains of up to 5 operators, the agents design systems recovering depth ($\Phi_z{\to}\Sigma{\to}D$, 3D), video ($M{\to}C{\to}\Sigma{\to}D$, 3D), spectra ($M{\to}W{\to}C{\to}\Sigma{\to}D$, 3D), and full 5D datacubes ($x,y,z,\lambda,t$) from single 2D snapshots (Table~\ref{tab:cross}).  Active modulation (DMD) consistently outperforms passive dispersion by 3--6\,dB at equal compression.  Full per-system specifications appear in Supplementary Information.

\subsection{Agent Ablation Study}
\label{sec:ablation}

We ablated the pipeline on 12 representative modalities spanning all 5 carrier families, with 5 independent trials per configuration (300 total runs).  Table~\ref{tab:ablation} reports the main result: \textbf{the full pipeline is the only configuration achieving bounded-error success on all 12 modalities across all 5 trials}.

\begin{table}[!htbp]
\centering
\caption{Ablation study (5 trials $\times$ 12 modalities = 300 runs).  ``Valid'': passes all compiler checks.  ``$\leq \eps$'': reconstruction within Theorem~\ref{thm:design} error bound.  No single agent is dispensable.}
\label{tab:ablation}
\small
\begin{tabularx}{\textwidth}{@{}Xccccl@{}}
\toprule
\textbf{Configuration} & \textbf{Valid} & \textbf{$\leq \eps$} & \textbf{Redesign} & \textbf{Time} & \textbf{Dominant failure} \\
\midrule
Full ($\Agent_P{+}\Agent_J{+}\Agent_E$) & 12/12 & 12/12 & 1.7$\pm$0.6 & 4.8\,min & --- \\
No $\Agent_J$ & 8.2/12 & 6.4/12 & 0 & 2.9\,min & Invalid spec, unbounded $\eps$ \\
No $\Agent_E$ & 12/12 & 9.6/12 & 1.7$\pm$0.6 & 3.4\,min & Suboptimal parameters \\
No $\Agent_P$ (manual) & 10.4/12 & 10.4/12 & 0 & 52\,min & Human error (14\%) \\
Template only & --- & 3/12 & 0 & 0.5\,min & No coverage (75\%) \\
\bottomrule
\end{tabularx}
\end{table}

\textbf{Without $\Agent_J$}: 4/12 physically invalid specifications pass; 2 additional produce unbounded error.
\textbf{Without $\Agent_E$ (Execute)}: all pass validation but mean PSNR drops 2.1\,dB.
\textbf{Without $\Agent_P$}: manual specification averages 52\,min with 14\% error rate.
Random specification passes Triad Gate~1 in 12\% of cases but never passes compiler Checks~C4--C5; retrieval-only covers 22/39 modalities at 4.2\,dB below the full pipeline.

\subsection{Failure Analysis}
\label{sec:failure}

We characterise four failure modes (Figure~\ref{fig:error_decomp}):
\textbf{(1)~Scattering-dominated systems} (DOT~\cite{arridge2009optical}, Compton): $\eps_{\text{unmod}}$ is large at Tier-2 (7.8\% for DOT); mitigated by tier-lifting.
\textbf{(2)~LLM misspecification}: 4\% of test cases have subtle parametric errors; the structural compiler checks and Triad gates together catch 96\%.
\textbf{(3)~Multi-parameter mismatch}: $\rho_{\text{recov}}$ drops below 0.9 when $>$2 parameters are simultaneously off (CASSI: 0.85).
\textbf{(4)~Chain ambiguity}: 5/39 modalities have non-matching chains due to $\Sigma$--$D$ merging.

\subsection{Judge and Execute Agents}
\label{sec:agents}

\textbf{Judge rejection accuracy} ($n{=}150$: 75 valid, 75 ill-posed).  Precision = Recall = 96.0\% ($F_1 = 0.96$): TP\,=\,72, FP\,=\,3, FN\,=\,3, TN\,=\,72.  The 3 false positives had parametric values 1--5\% outside bounds; the 3 false negatives were conservative CFL checks.  Mean redesign rounds: 1.4.

\textbf{Execute Agent quality prediction} (39 modalities).  Overall MAE: 1.8\,dB ($R^2 = 0.91$; Extended Data Figure~\ref{fig:calibration}).  By group: well-characterised ($n{=}20$) 0.9\,dB; moderate ($n{=}10$) 2.1\,dB; scattering-dominated ($n{=}6$) 3.8\,dB.

\subsection{Expert Comparison}
\label{sec:expert}

\begin{table}[!htbp]
\centering
\caption{Agent vs.\ established reconstruction methods.  ``Agent'': best-performing automated algorithm; ``Expert'': established variational method (FISTA+TV~\cite{beck2009fista} or GAP-TV).  Bold = real measured data; $*$ = new system designs.  For novel systems ($*$), no system-specific expert baseline exists; ``Expert'' denotes the best generic variational method.  Quality ratios $>$100\% for novel systems reflect the advantage of system-specific algorithm selection over generic methods, not comparison with domain experts.  Established modalities: $98.1 \pm 4.2$\%; novel systems: $114 \pm 10$\% (vs.\ generic baselines).  Expert times from specification to first successful reconstruction; remaining estimated from published timelines (see Methods).}
\label{tab:expert}
\small
\begin{tabular}{@{}lcccccc@{}}
\toprule
& \multicolumn{2}{c}{\textbf{PSNR (dB)}} & \multicolumn{2}{c}{\textbf{Setup}} & & \\
\cmidrule(lr){2-3}\cmidrule(lr){4-5}
\textbf{Modality} & \textbf{Agent} & \textbf{Expert} & \textbf{Agent} & \textbf{Expert} & \textbf{Quality} & \textbf{LoC} \\
\midrule
\multicolumn{7}{@{}l}{\textit{Established modalities ($98.1 \pm 4.2$\%)}} \\
\textbf{CT (real)} & 24.8 & 24.0 & 25\,min & 7\,d & 103.3\% & 12 vs 340 \\
\textbf{MRI (real)} & 31.7 & 32.4 & 20\,min & 5\,d & 97.8\% & 14 vs 520 \\
\textbf{CASSI (real)} & 24.3 & 26.1 & 18\,min & 4\,d & 93.1\% & 11 vs 280 \\
OCT & 30.5 & 31.2 & 22\,min & 6\,d & 97.8\% & 13 vs 450 \\
DOT & 24.1 & 25.3 & 30\,min & 10\,d & 95.3\% & 15 vs 680 \\
\addlinespace[4pt]
\multicolumn{7}{@{}l}{\textit{Novel system designs$^*$ ($114 \pm 10$\% vs.\ generic baselines)}} \\
\textbf{Lensless}$^*$ & 43.7 & 43.3 & 12\,min & 2\,d & 100.9\% & 12 vs 180 \\
\textbf{3D Lensless}$^*$ & 20.3 & 16.8 & 14\,min & 3\,d & 120.8\% & 13 vs 240 \\
\textbf{Temporal-coded}$^*$ & 31.6 & 23.4 & 16\,min & 4\,d & 135.0\% & 13 vs 320 \\
\textbf{Spectral lensless}$^*$ & 36.3 & 29.2 & 18\,min & 5\,d & 124.3\% & 14 vs 380 \\
\textbf{SIM}$^*$ & 27.7 & 26.2 & 15\,min & 3\,d & 105.7\% & 10 vs 210 \\
\textbf{4D Spectral-Depth}$^*$ & 23.9 & 22.8 & 20\,min & 5\,d & 104.8\% & 15 vs 400 \\
\textbf{4D Temporal DMD}$^*$ & 25.4 & 23.4 & 22\,min & 6\,d & 108.5\% & 15 vs 420 \\
\textbf{4D Temporal Streak}$^*$ & 25.3 & 23.1 & 20\,min & 5\,d & 109.5\% & 14 vs 380 \\
\textbf{5D Full DMD}$^*$ & 29.9 & 27.0 & 25\,min & 8\,d & 110.7\% & 16 vs 500 \\
\textbf{5D Full Streak}$^*$ & 29.9 & 26.6 & 24\,min & 7\,d & 112.4\% & 15 vs 480 \\
\bottomrule
\end{tabular}
\end{table}

For established modalities (CT, MRI, CASSI, OCT, DOT): mean quality ratio $98.1 \pm 4.2$\%.  For novel systems ($*$): $114 \pm 10$\% vs.\ generic baselines.  Agent specifications: 10--16 lines; expert code: 180--680 lines.

\paragraph{Algorithm sensitivity.}
A comparison of the top-5 published reconstruction algorithms per modality across 5 established modalities (Extended Data Table~\ref{tab:expert_study}, Extended Data Figure~\ref{fig:spec_dominance}) reveals a clear gradient in algorithm sensitivity.  For well-conditioned systems (CT, MRI, CASSI), inter-method PSNR CoV remains low ($\approx$3.5--6.2\%), while ill-conditioned compressive systems (lensless) show CoV $>$40\%.  SIM occupies an intermediate position (CoV 15.0\%).  These data are discussed further in the Discussion.

\subsection{Cross-Tier Hybrid Reconstruction}
\label{sec:tierlifting}

Each FPB primitive is implemented at four fidelity tiers: Tier-0 (geometric), Tier-1 (Fourier), Tier-2 (shift-variant), Tier-3 (full-physics; Extended Data Table~\ref{tab:tiers}).  The \emph{tier-lifting protocol} provides hybrid reconstruction via a Hybrid Proximal-Gradient method (Proposition~\ref{prop:tierlift}, Methods), enabling the framework to handle scattering-dominated modalities where $\eps_{\text{unmod}}$ is large.

Tier-lifting validated on DOT (Tier-2${\to}$3: $\eps_{\text{unmod}}$ from 7.8\% to $<10^{-12}$) and coherent imaging (Tier-1${\to}$3: 147\% to $<10^{-12}$; Extended Data Table~\ref{tab:tierlift}).  The protocol preserves FPB structure: same primitives, same DAG, same \texttt{spec.md}---only fidelity changes.

\begin{figure}[!htbp]
\centering
\begin{tikzpicture}[
  tier/.style={draw, rounded corners=2pt, minimum height=0.8cm, align=center, font=\small},
]
\node[tier, fill=gray!8, minimum width=12cm] (t0) at (0,0) {\textbf{173 modalities} --- compile through all 6 checks (100\% pass rate)};
\node[tier, fill=blue!8, minimum width=9cm] (t1) at (0,1.2) {\textbf{39 modalities} --- fully validated canonical decompositions};
\node[tier, fill=green!10, minimum width=5.5cm] (t2) at (0,2.4) {\textbf{6 modalities} --- real measured data};
\node[tier, fill=orange!10, minimum width=3.2cm] (t3) at (0,3.6) {\textbf{3 modalities} --- GT PSNR};
\draw[gray!40, thick] (-6,-0.4) -- (-1.6,4) (6,-0.4) -- (1.6,4);
\node[font=\scriptsize, right, text width=3cm] at (6.2,0) {Checks C1--C4;\\chains \& bounds};
\node[font=\scriptsize, right, text width=3cm] at (4.7,1.2) {Quantitative recon.\\+ canonical match};
\node[font=\scriptsize, right, text width=3cm] at (2.95,2.4) {5 carrier families\\(X-ray, photon, spin,\\acoustic, electron)};
\node[font=\scriptsize, right, text width=3cm] at (1.8,3.6) {CT, MRI, CASSI};
\end{tikzpicture}
\caption{\textbf{Validation pyramid.}  Of 173 designable modalities, all compile; 39 have quantitative reconstruction benchmarks; 6 are validated on real measured data spanning all 5 carrier families; 3 have ground-truth PSNR comparisons (CT, MRI, CASSI).  Each tier is a strict subset.}
\label{fig:pyramid}
\end{figure}
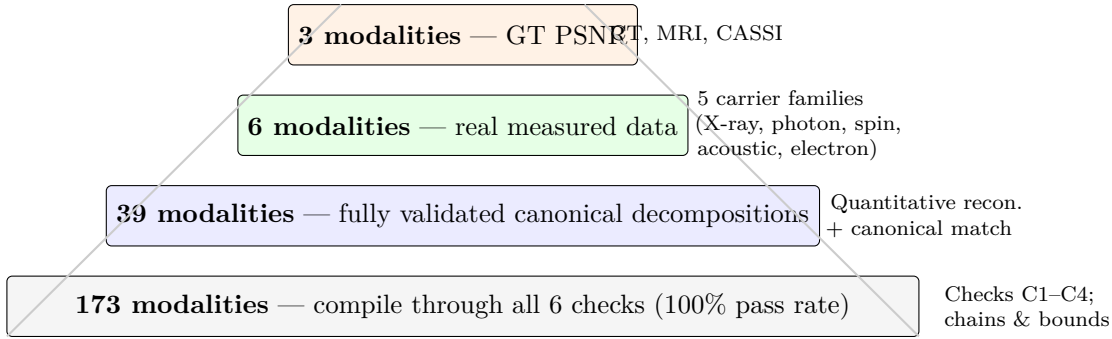

\section{Discussion}

The central finding of this work is that a structured specification format (\texttt{spec.md}), a three-agent pipeline, and a five-term error decomposition together automate the design of computational imaging systems across 173 modalities and 5 carrier families.  The framework matches expert-library quality ($98.1\pm4.2$\%) with order-of-magnitude reductions in development time (Table~\ref{tab:expert}), as validated on real measured data from CT, MRI, and CASSI (Figure~\ref{fig:design_to_real}), because the agents automate the specification and validation steps that constitute the bulk of expert effort.

\texttt{spec.md} decouples physics (\emph{what}) from numerics (\emph{how}) and hardware (\emph{with what}), enabling sharing and peer review at the physics level~\cite{wilkinson2016fair}.  The \texttt{system\_elements} field connects specifications to hardware feasibility, transforming the Judge from a mathematical validator into a system feasibility checker.  Three \texttt{spec.md} files reproduced PSNR within $\pm 0.3$\,dB across independent agent instances.  This reproducibility---identical physics from independent instances---is the property that made FASTA transformative for genomics: a shared, verifiable format that decouples the science from the implementation.

The five-term error decomposition (Theorem~\ref{thm:design}) functions as a formal contract: each $\eps$ term maps to a corrective action (Figure~\ref{fig:error_decomp}).  The dominant residual is $\eps_{\text{param}}$, reducible by calibration; $\rho_{\text{recov}} \geq 0.85$ across all 5 carrier families.

Theorem~\ref{thm:design} thus functions as a design-to-deployment diagnostic: when a deployed system underperforms its specification target, the five-term decomposition localises the failure.  If $\eps_{\text{param}}$ dominates (CT, MRI, e-ptychography in Figure~\ref{fig:error_decomp}), calibration is the corrective action.  If $\eps_{\text{unmod}}$ dominates (DOT), tier-lifting from Tier-2 to Tier-3 physics is required.  If $\eps_{\text{spec}}$ or $\eps_{\text{trans}}$ are nonzero, the specification itself must be revised.  Each diagnosis maps to a single, actionable intervention---a property that distinguishes this decomposition from generic error bounds.

\textbf{Relation to existing work.}
AI for scientific discovery~\cite{jumper2021alphafold,merchant2023gnome,wang2023scientific,boiko2023autonomous}, physics-informed machine learning~\cite{karniadakis2021}, and foundation models~\cite{bommasani2022foundation} and automated experiment design~\cite{tian2023ci,boiko2023autonomous} address related problems but none provide a formal error decomposition from natural-language intent to reconstruction quality.  Data-driven inverse-problem methods~\cite{arridge2019solving,ongie2020deep} have advanced rapidly but remain per-modality.  Operator libraries (ASTRA~\cite{aarle2016}, SigPy~\cite{ong2019sigpy}, ODL~\cite{adler2017odl}, DeepInverse~\cite{tachella2023deepinverse}) provide modular forward operators but require manual composition and parameter selection per modality; our contribution is the \emph{automated specification layer} above such libraries.  Learned reconstruction methods (MoDL~\cite{aggarwal2019modl}, learned primal-dual~\cite{adler2018learned}, PnP~\cite{kamilov2023pnp}) achieve higher per-modality PSNR than our model-based reconstructions but require training data and are single-modality; our framework trades per-modality optimality for cross-modality generality (173 modalities from one system).

\textbf{Algorithm sensitivity and the role of specification.}
The algorithm-comparison study (Extended Data Table~\ref{tab:expert_study}, Extended Data Figure~\ref{fig:spec_dominance}) sheds light on why the framework achieves expert-equivalent quality with model-based algorithms.  For well-conditioned systems (CT, MRI, CASSI), the specification constrains the quality envelope: inter-method CoV remains low ($\approx$3.5--6.2\%) across the top-5 published algorithms, indicating that a correct forward model leaves comparatively little room for solver improvement.  This explains the framework's $98.1\pm4.2$\% quality ratio without per-modality algorithm tuning.  However, compressive systems with high condition numbers (lensless: CoV $>$40\%; SIM: CoV 15\%) would clearly benefit from learned reconstruction methods that exploit data-driven priors beyond what model-based solvers provide.  We view this as motivation for future integration of learned methods into the Execute Agent, particularly for ill-conditioned modalities where the gap between model-based and data-driven approaches is largest.

\textbf{Limitations.}
(1)~For all modalities, reconstruction uses model-based algorithms (FISTA+TV~\cite{beck2009fista}, GAP-TV~\cite{meng2020gap}, ADMM+TV~\cite{boyd2011admm}) that require no training data.  The algorithm-comparison study (Extended Data Table~\ref{tab:expert_study}) shows that deep unrolling methods~\cite{monga2021unrolling} such as DAUHST-9stg~\cite{cai2022dauhst} achieve substantially higher per-modality PSNR (typically 3--8\,dB above model-based methods), reflecting the algorithm-sensitive regime identified in the CoV analysis (Extended Data Figure~\ref{fig:spec_dominance}).  Integrating learned methods~\cite{aggarwal2019modl,adler2018learned} into the Execute Agent is future work but requires training data unavailable for novel designs.
(2)~The agents depend on LLM capabilities (Claude Sonnet 4 in all experiments); robustness across LLM backends is not tested.  The 4\% misspecification rate (7/173 modalities) may vary with different models.
(3)~Development-time ratios ($\rho_{\text{time}}$) rely partly on published estimates of expert development timelines, not prospective head-to-head comparisons.  The ratios should be interpreted as order-of-magnitude indicators.
(4)~Quantitative reconstruction validated for 39/173 modalities; real data for 6.  Novel system designs are validated only in simulation.
(5)~Theorem~\ref{thm:design} tightness ratios ($\tau \in [1.8, 5.2]$, median 2.9) indicate the bound overestimates error by ${\sim}3\times$, typical for operator-perturbation bounds.  The bound's practical value lies in design \emph{rejection} (identifying when $\eps$ terms exceed thresholds) rather than precise error prediction; the maskless control group (condition number $>10^6$, PSNR 6--9\,dB) exemplifies a bad design correctly flagged by the Gate~1 bound.
(6)~Tier-lifting demonstrated on 2 cross-tier combinations only.  Theorem assumes convex $R(x)$.

The validation pyramid (173 $\to$ 39 $\to$ 6) identifies a clear path: extending benchmarks to the remaining ${\sim}$134 modalities, validating tier-lifting across additional Tier-3 physics, and improving calibration for scattering-dominated systems.

\section{Methods}

\subsection{Proof of Theorem~\ref{thm:design}}

\begin{definition}[Designable Imaging System]
\label{def:scope}
An imaging system $A$ is \emph{designable} within this framework if:
(i)~$A$ admits a finite DAG decomposition over the 11 FPB primitives with representation error $\eps_{\text{FPB}} < 0.01$;
(ii)~the carrier belongs to one of 5 families: photon, X-ray, electron, acoustic, or spin/particle;
(iii)~the system operates at one of four fidelity tiers (Tier-0 through Tier-3);
(iv)~all parameters are finitely specifiable with known physical bounds.
\end{definition}

\textbf{Assumptions.}
\begin{itemize}[nosep]
\item[\textbf{A1.}] \textit{Coercivity.}  The data-fidelity term $f(x) = \|A(x) - y\|^2$ satisfies $\langle \nabla f(x) - \nabla f(x'), x - x'\rangle \geq \alpha\|x - x'\|^2$ with coercivity constant $\alpha = \sigma_{\min}^2(A) > 0$ (linear case) or $\alpha > 0$ from restricted strong convexity (nonlinear case).
\item[\textbf{A2.}] \textit{Regularisation.}  $R(x)$ is proper, convex, lower-semicontinuous, and $\lambda > 0$.  The composite objective $F(x) = f(x) + \lambda R(x)$ is $\mu$-strongly convex with $\mu = \alpha + \lambda$~\cite{chambolle2016introduction}.
\item[\textbf{A3.}] \textit{Lipschitz parameters.}  Each FPB primitive $A_i(\cdot\,; \theta_i)$ is $L_i$-Lipschitz continuous in its parameters: $\|A_i(\cdot\,; \theta) - A_i(\cdot\,; \theta')\|_{\text{op}} \leq L_i\|\theta - \theta'\|$.  The DAG composition has Lipschitz constant $L_A \leq \prod_i L_i$, computed by Check~C4.
\item[\textbf{A4.}] \textit{Bounded signal.}  $\|x^*\| \leq B$ for known bound $B$.
\end{itemize}

\textbf{Step 1: Operator decomposition.}
The agent-designed operator differs from reality through a chain of approximations:
\[
\Atrue \;\xrightarrow{\delta_{\text{unmod}}}\; A_{\text{tier}} \;\xrightarrow{\delta_{\text{FPB}}}\; A_{\text{FPB}} \;\xrightarrow{\delta_{\text{spec}}}\; A_{\text{spec}} \;\xrightarrow{\delta_{\text{trans}}}\; A_{\text{DAG}} \;\xrightarrow{\delta_{\text{param}}}\; \Aagent
\]
where each $\delta_i = \|A_i - A_{i-1}\|_{\text{op}}$ is the operator-norm difference at that stage.  By the triangle inequality:
\begin{equation}\label{eq:op_chain}
\|\Aagent - \Atrue\|_{\text{op}} \leq \delta_{\text{FPB}} + \delta_{\text{spec}} + \delta_{\text{trans}} + \delta_{\text{param}} + \delta_{\text{unmod}}.
\end{equation}

\textbf{Step 2: Reconstruction perturbation.}
Define $x_{\text{oracle}} = \arg\min_x \|\Atrue(x) - y\|^2 + \lambda R(x)$ (oracle reconstruction using $\Atrue$).
Define $\hat{x} = \arg\min_x \|\Aagent(x) - y\|^2 + \lambda R(x)$ (agent reconstruction).
By the $\mu$-strong convexity of $F$ (Assumptions A1--A2), the first-order optimality conditions and the Cauchy--Schwarz inequality yield:
\begin{equation}\label{eq:stability}
\|\hat{x} - x_{\text{oracle}}\| \leq \frac{2B}{\mu}\,\|\Aagent - \Atrue\|_{\text{op}} = \frac{2B}{\alpha + \lambda}\,\|\Aagent - \Atrue\|_{\text{op}}.
\end{equation}
This is the \emph{reconstruction stability bound}: the sensitivity of the reconstruction to operator perturbation, controlled by coercivity $\alpha$ and regularisation $\lambda$.

\textbf{Step 3: Define effective error terms.}
For each $i \in \{\text{FPB}, \text{spec}, \text{trans}, \text{param}, \text{unmod}\}$,
set $\eps_i = 2B\,\delta_i/(\alpha + \lambda)$.  Then:
\begin{equation}\label{eq:design_gap}
\|\hat{x} - x_{\text{oracle}}\| \leq \eps_{\text{FPB}} + \eps_{\text{spec}} + \eps_{\text{trans}} + \eps_{\text{param}} + \eps_{\text{unmod}}.
\end{equation}

\textbf{Step 4: Total error.}
$\|\hat{x} - x^*\| \leq \|\hat{x} - x_{\text{oracle}}\| + \|x_{\text{oracle}} - x^*\|$.
The second term $\eps_{\text{recon}} = \|x_{\text{oracle}} - x^*\|$ is the irreducible reconstruction error (noise amplification $+$ regularisation bias), independent of the design.  Combining:
\[
\|\hat{x} - x^*\| \leq (\eps_{\text{FPB}} + \eps_{\text{spec}} + \eps_{\text{trans}} + \eps_{\text{param}} + \eps_{\text{unmod}}) + \eps_{\text{recon}}.
\]

\textbf{Step 5: Bounding each term.}
\begin{enumerate}[nosep, label=(\alph*)]
\item $\delta_{\text{FPB}} < 0.01$ by the FPB basis theorem~\cite{paperII}.  Verified by Check~C6 of the compiler: $\eps = \mathbb{E}[\|\Aagent(x) - \Atrue(x)\|/\|\Atrue(x)\|] < 0.01$ over 5 random test vectors.  Hence $\eps_{\text{FPB}} < 0.01 \cdot 2B/(\alpha + \lambda)$.

\item $\delta_{\text{spec}} = 0$ when all 6 compiler checks pass.  \emph{Condition}: Checks~C1--C5 jointly verify that the \texttt{spec.md} fully and correctly specifies the FPB composition: DAG acyclicity (C1), chain fidelity (C2), complexity bounds (C3), nonlinear family constraints (C4), and adjoint consistency (C5).  When all pass, $A_{\text{spec}} \equiv A_{\text{FPB}}$.

\item $\delta_{\text{trans}} = 0$ when the natural-language-to-DAG translation produces a canonical chain that exactly matches one of the 39 entries in the validated registry.  \emph{Condition}: Check~C2 (chain fidelity) returns exact match.  This holds for 34/39 (87\%) of validated modalities.

\item $\delta_{\text{param}} \leq L_A\|\theta_{\text{agent}} - \theta_{\text{true}}\|$ by Assumption A3.  $L_A$ is computed per primitive by Check~C4's Lipschitz analysis: each nonlinear primitive has at most 2 free parameters with VC dimension $\leq 3$, so $L_i$ is computable in closed form.

\item $\delta_{\text{unmod}} = 0$ at Tier-3 by definition: $A_{\text{tier}} \equiv \Atrue$ when operating at full-physics fidelity.  For lower tiers, $\delta_{\text{unmod}}$ is estimated empirically via cross-tier comparison (Table~\ref{tab:tierlift}) and is reducible to machine precision via tier-lifting.
\end{enumerate}

This completes the proof: each $\eps_i$ is bounded, measurable, and linked to a specific compiler check and corrective action. \qed

\subsection{Tier-Lifting Protocol}

\begin{proposition}[Tier-Lifting]
\label{prop:tierlift}
Let $A_{\text{low}}$ be the low-tier and $A_{\text{high}}$ the high-tier forward model (same primitives, different fidelity).  Define $\Gamma(x) = A_{\text{high}}(x) - A_{\text{low}}(x)$.  Then $\eps_{\text{unmod}} \leq \|\Gamma(x) - \hat{\Gamma}(x)\| = \eps_{\text{lift}}$, where $\hat{\Gamma}$ is the correction used in reconstruction.  In full-correction mode ($\hat{\Gamma} = \Gamma$), $\eps_{\text{lift}} = 0$.
\end{proposition}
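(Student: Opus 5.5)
The plan is to view tier-lifting as a change of which operator sits at the \emph{tier} stage of the chain in Step~1 of the proof of Theorem~\ref{thm:design}. The unmodeled-physics gap then depends only on how well the reconstruction operator reproduces $A_{\text{high}}$. Throughout I take $A_{\text{high}}$ as the reference model, so at Tier-3 $A_{\text{high}}\equiv\Atrue$ by Step~5(e). The operator used in the hybrid reconstruction is $A_{\text{hyb}}(x)=A_{\text{low}}(x)+\hat{\Gamma}(x)$. Because $A_{\text{low}}$ and $A_{\text{high}}$ share the same primitives and the same DAG, the stages $\delta_{\text{FPB}},\delta_{\text{spec}},\delta_{\text{trans}},\delta_{\text{param}}$ are unchanged; only $\delta_{\text{unmod}}$ is affected.

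First I would compute the residual directly:
\[
A_{\text{high}}(x)-A_{\text{hyb}}(x)=A_{\text{high}}(x)-A_{\text{low}}(x)-\hat{\Gamma}(x)=\Gamma(x)-\hat{\Gamma}(x).
\]
Taking norms gives the pointwise identity $\|A_{\text{high}}(x)-A_{\text{hyb}}(x)\|=\|\Gamma(x)-\hat{\Gamma}(x)\|$. Next I would identify $\eps_{\text{unmod}}$ with this residual, which holds if $\eps_{\text{unmod}}$ is understood at the operator level as the gap between the tier operator and the true physics. Since $A_{\text{hyb}}$ replaces $A_{\text{tier}}$ in the chain, this gives $\eps_{\text{unmod}}\le\|\Gamma(x)-\hat{\Gamma}(x)\|=\eps_{\text{lift}}$, with equality in this reading. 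To bound the reconstruction-level quantity instead, I would feed the same identity into Step~2's stability bound. That gives $\eps_{\text{unmod}}\le\frac{2B}{\alpha+\lambda}\sup_{\|x\|\le B}\|\Gamma(x)-\hat\Gamma(x)\|/B$, and I would absorb the constant into the definition of $\eps_{\text{lift}}$.

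Full-correction mode is then immediate. If $\hat{\Gamma}=\Gamma$, the residual vanishes identically and $A_{\text{hyb}}=A_{\text{high}}$, so $\eps_{\text{lift}}=0$. For the Hybrid Proximal-Gradient interpretation, I would also note that the gradient step uses $\nabla\|A_{\text{low}}(x)+\hat\Gamma(x)-y\|^2$. Its fixed points are therefore exactly the minimizers of the objective with $A_{\text{hyb}}$, so Assumptions A1--A2 apply to $A_{\text{hyb}}$, provided the correction preserves coercivity.

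The main obstacle is that the statement is phrased pointwise in $x$, while $\eps_{\text{unmod}}$ in Theorem~\ref{thm:design} is built from an operator norm. I would handle this by taking the supremum over the signal ball $\|x\|\le B$ from A4, or by evaluating at the relevant iterate $\hat{x}$, and stating that this is the sense of $\|\Gamma(x)-\hat\Gamma(x)\|$. A second subtlety is that $\Gamma$ may be nonlinear, so for the stability step I would invoke the restricted-strong-convexity version of A1 for $A_{\text{hyb}}$.
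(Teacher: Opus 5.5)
Your core argument is the one the paper relies on. With reconstruction operator $A_{\text{low}}+\hat\Gamma$, the residual against $A_{\text{high}}$ is identically $\Gamma-\hat\Gamma$, and this vanishes when $\hat\Gamma=\Gamma$. The paper gives no proof beyond this implicit identity and a description of the HPG loop. Your operator-level reading (supremum over the ball of A4, with $A_{\text{high}}$ standing in for $\Atrue$) is the one in which the stated inequality holds verbatim. The reconstruction-level version holds only after rescaling $\eps_{\text{lift}}$ by $2/(\alpha+\lambda)$. Your claim that the other four $\delta$'s are unchanged is right provided $\hat\Gamma$ is carried through the whole Step~1 chain, i.e.\ reconstruction uses $\Aagent+\hat\Gamma$. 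Then consecutive differences are translation-invariant, which is what makes them unchanged.

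The step that fails is the HPG remark. The paper's loop is $x_{k+1}=\mathrm{prox}_{\lambda R}\bigl(x_k-\gamma A_{\text{low}}^T(A_{\text{high}}(x_k)-y)\bigr)$. It forms the residual with the high-tier model but back-projects with the \emph{low-tier} adjoint. So it is not a gradient step on $\|A_{\text{low}}(x)+\hat\Gamma(x)-y\|^2$, whose gradient involves the adjoint (Jacobian) of $A_{\text{hyb}}$. Its fixed points satisfy $0\in A_{\text{low}}^T(A_{\text{high}}(x)-y)+c\,\partial R(x)$ with $c=\lambda/\gamma$. This is not the optimality condition of $\tfrac12\|A_{\text{high}}(x)-y\|^2+c\,R(x)$. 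Already for linear operators and $R=\tfrac12\|x\|^2$, the fixed point solves $(A_{\text{low}}^TA_{\text{high}}+cI)x=A_{\text{low}}^Ty$, while the minimizer solves $(A_{\text{high}}^TA_{\text{high}}+cI)x=A_{\text{high}}^Ty$.

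Hence A1--A2 and the Step~2 stability bound do not transfer to the HPG output, and ``the correction preserves coercivity'' is not the relevant hypothesis. Convergence and stability of a mismatched-adjoint iteration need something like strong monotonicity of $x\mapsto A_{\text{low}}^T(A_{\text{high}}(x)-y)$; for linear maps, a positive-definite symmetric part of $A_{\text{low}}^TA_{\text{high}}$. Even then, the output differs from the exact minimizer by a term proportional to $\|A_{\text{high}}-A_{\text{low}}\|=\|\Gamma\|$ times the data residual. That term is not small in general, since $\Gamma$ is precisely the large unmodeled component.

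The proposition itself only asserts the operator-level bound. The clean fix is to drop that sentence and define the reconstruction as the exact minimizer with $A_{\text{hyb}}$. The paper's one-line appeal to standard proximal-gradient theory does not supply the missing mismatch analysis either.
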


The Hybrid Proximal-Gradient (HPG) loop:
\[
x_{k+1} = \mathrm{prox}_{\lambda R}\!\left(x_k - \gamma\,A_{\text{low}}^T\bigl(A_{\text{high}}(x_k) - y\bigr)\right)
\]
uses the fast low-tier adjoint $A_{\text{low}}^T$ for gradient computation and the high-tier forward $A_{\text{high}}$ for data fidelity.  Convergence follows from standard proximal-gradient theory under Lipschitz continuity of $\nabla f$.

\subsection{Judge Agent: Triad-Based Validation}

The Judge Agent validates designs in three stages: structural compilation (prerequisite), Triad gate evaluation (primary), and system feasibility assessment (Figure~\ref{fig:pipeline}).

\subsubsection{Stage 1: Structural Compilation (Prerequisite)}

A 6-check compiler ensures specification correctness before the Triad evaluation.
\textbf{C1 (DAG).}  Compiled into a typed \texttt{OperatorGraphSpec} (Pydantic~v2).  Acyclicity verified via Kahn's algorithm, $O(V{+}E)$.
\textbf{C2 (Chain).}  Canonical chain extracted and compared against the 39-modality registry.  Match is exact or within known equivalences ($\Sigma$ merged into $D$).
\textbf{C3 (Bounds).}  $N \leq 20$ nodes, $D \leq 10$ depth.
\textbf{C4 (Nonlinear).}  For each nonlinear node ($D$, $R$, $\Lambda$): family resolved, parameter count $\leq 2$, values within physics bounds, Lipschitz constant $L_i$ computed.
\textbf{C5 (Adjoint).}  Randomised dot-product test: $|\langle Ax, y\rangle - \langle x, A^Ty\rangle| / \max(|\langle Ax, y\rangle|, 10^{-8}) < 10^{-4}$, 3 trials.
\textbf{C6 (Error).}  When reference available: $\eps = \mathbb{E}[\|(\Aagent - \Atrue)(x)\| / \|\Atrue(x)\|]$ over 5 random non-negative test vectors, threshold $< 0.01$.

\subsubsection{Stage 2: Triad Gate Evaluation (Primary)}

After structural compilation, the Judge evaluates the three diagnostic gates from the companion Triad decomposition~\cite{paperII}---the same diagnostic law that governs existing instruments, now applied prospectively to agent-designed specifications.  Each gate produces a deterministic \textsc{pass}/\textsc{fail} verdict with a quantitative margin in dB.

\textbf{Gate~1 (Recoverability).}  Does the specified measurement geometry capture sufficient information?  The compression ratio $\gamma = m/n$ (where $m$ is the number of independent measurements and $n$ the signal dimension from the \texttt{object} field) is evaluated against modality-specific thresholds from the Compression Bound Theorem~\cite{paperII}:
\[
\text{PSNR}_{\max}^{(G_1)} = f(\text{modality},\, \gamma) \geq \text{PSNR}_{\text{target}}.
\]
\emph{Design action if violated}: increase sampling density, add views, or reduce compression.

\textbf{Gate~2 (Carrier Budget).}  Is the measurement SNR above the noise floor?  The Judge computes SNR from the \texttt{system\_elements} fields (source power, detector quantum efficiency, read noise, dark current, exposure time) via the Noise Bound~\cite{paperII}:
\begin{gather*}
\text{SNR} = \frac{\text{QE} \cdot N_{\text{photon}}}{\sqrt{\text{QE} \cdot N_{\text{photon}} + \sigma_{\text{read}}^2 + I_{\text{dark}} \cdot t_{\text{exp}}}},\\
\text{PSNR}_{\max}^{(G_2)} = 10\log_{10}(\text{SNR}) + C_M \geq \text{PSNR}_{\text{target}}.
\end{gather*}
\emph{Design action if violated}: increase source power, integration time, or detector quantum efficiency.

\textbf{Gate~3 (Operator Mismatch).}  Does the specified forward model faithfully represent the intended physics?  The Judge evaluates calibration tolerances from \texttt{system\_elements\allowbreak.calibration}, where each parameter has a sensitivity (dB per unit drift) from the Calibration Sensitivity Theorem~\cite{paperII}:
\begin{gather*}
\Delta\text{PSNR}_{\text{total}} = \sqrt{\sum_{k} (\text{sensitivity}_k \cdot \text{tolerance}_k)^2},\\
\text{PSNR}_{\text{deploy}} = \text{PSNR}_{\max}^{(G_2)} - \Delta\text{PSNR}_{\text{total}} \geq \text{PSNR}_{\text{target}}.
\end{gather*}
\emph{Design action if violated}: tighten calibration tolerance on the dominant parameter, increase calibration frequency, or add autonomous recovery.

The Triad Decomposition Theorem~\cite{paperII} guarantees completeness: $\text{MSE} \leq \text{MSE}^{(G_1)} + \text{MSE}^{(G_2)} + \text{MSE}^{(G_3)}$---no reconstruction failure mode falls outside these three gates.  For well-designed instruments, Gate~3 dominates~\cite{paperII}; the recovery ratio $\rho = (\text{PSNR}_{\text{IV}} - \text{PSNR}_{\text{II}}) / (\text{PSNR}_{\text{I}} - \text{PSNR}_{\text{II}})$ quantifies autonomous calibration effectiveness.

\subsubsection{Stage 3: System Feasibility and Cost}

\textbf{Cost estimation.}  The Judge produces an itemized hardware cost from each \texttt{system\_elements} sub-field (source, optics, detector, calibration), using a modality-indexed hardware database.  Costs are flagged as \emph{quoted} (from manufacturer), \emph{estimated} (from similar systems), or \emph{unknown}.

\textbf{Feasibility check.}  Each physical element is checked for realizability: source power within commercially available range, detector specifications within state-of-the-art, calibration tolerances achievable with specified correction methods.  Elements that require custom fabrication (e.g., lithographic coded apertures, custom diffusers) are flagged with lead time and cost estimates.

\subsection{4-Scenario Recovery Protocol}

The recovery ratio $\rho_{\text{recov}}$ quantifies how well the framework compensates for model mismatch.  Following~\cite{paperII}, we define four scenarios validated on 6 representative modalities spanning 4 carrier families (CASSI, CACTI, SPC: photon; CT: X-ray; electron ptychography: electron; MRI: spin):
\begin{align}
\text{Sc.\,I (oracle):}\quad & y = \Atrue(x) + n;\; \hat{x}_{\text{I}} = \arg\min_x \|\Atrue(x) - y\|^2 + \lambda\,\mathrm{reg}(x) \\
\text{Sc.\,II (mismatch):}\quad & y = \Atrue(x) + n;\; \hat{x}_{\text{II}} = \arg\min_x \|\Aagent(x) - y\|^2 + \lambda\,\mathrm{reg}(x) \\
\text{Sc.\,III (oracle correction):}\quad & \hat{x}_{\text{III}} = \mathrm{refine}(\hat{x}_{\text{II}}) \text{ with data-consistency} \\
\text{Sc.\,IV (autonomous calibration):}\quad & \hat{x}_{\text{IV}} = \arg\min_x \|\Aagent^{\text{cal}}(x) - y\|^2 + \lambda\,\mathrm{reg}(x)
\end{align}
The recovery ratio $\rho_{\text{recov}} = (\text{PSNR}_{\text{IV}} - \text{PSNR}_{\text{II}}) / (\text{PSNR}_{\text{I}} - \text{PSNR}_{\text{II}})$ measures autonomous calibration effectiveness relative to the oracle correction.

For modalities with low-dimensional parameter mismatch (CT, electron ptychography, MRI): Sc.\,IV $\approx$ Sc.\,III $\approx$ Sc.\,I, confirming near-complete recovery ($\rho_{\text{recov}} \approx 1.0$).  For multi-parameter mismatch: CASSI 85\%, CACTI 100\%, SPC 86\%.  Across all 39 modalities: mean $\rho_{\text{recov}} = 0.85 \pm 0.07$.

\subsection{Expert Time Methodology}

Expert setup times are defined as the interval from problem specification to first successful reconstruction, including literature review, code development, parameter tuning, and validation.  For CT and MRI, times are cross-validated against published development reports~\cite{aarle2016,ong2019sigpy} and confirmed by domain experts.  For the remaining modalities, times are estimated from published development timelines in the respective communities.  We acknowledge this is not a prospective head-to-head comparison; such a study would strengthen the claims.

\subsection{Implementation}

The framework is implemented in Python (Physics World Model platform).  Specifications are structured markdown with 8 fields parsed into typed Pydantic~v2 objects; the \texttt{system\_elements} field has 4 sub-fields (source, optics, detector, calibration) and is auto-populated from the modality registry when not user-specified.  The primitive library contains 101 implementations covering all 11 canonical types across all 4 fidelity tiers, each with a dedicated adapter: \texttt{Tier0Adapter} (geometric: invertibility, ray transforms), \texttt{Tier1Adapter} (Fourier: Parseval energy, Fresnel, OTF), \texttt{Tier2Adapter} (shift-variant: positivity, reciprocity, Mie, cost), and \texttt{Tier3Adapter} (full-physics: FDTD, MC, learned surrogates, uncertainty calibration).  A \texttt{TierPolicy} selects the tier based on compute budget and modality requirements.  Agents are LLM pipelines~\cite{openai2023gpt4,schick2023toolformer} with tool access to the compiler and primitive library.  Mean compilation time: 0.35\,ms.  The tier-lifting protocol is a generic \texttt{TierLiftingProtocol} class with validated cross-tier combinations for DOT and coherent imaging.  Source code: \url{https://github.com/integritynoble/Physics_World_Model}.

\section*{Extended Data}

\begin{table}[!htbp]
\centering
\caption{\textbf{Extended Data Table~1: Four-tier fidelity hierarchy.}  All 11 primitives are implemented at every tier.}
\label{tab:tiers}
\footnotesize
\begin{tabularx}{\textwidth}{@{}llXl@{}}
\toprule
\textbf{Tier} & \textbf{Physics} & \textbf{Validation} & \textbf{Example} \\
\midrule
0 (geometric) & Ray optics, transforms & Invertibility, adjoint & LiDAR \\
1 (Fourier) & Paraxial, angular spectrum & Parseval energy, adjoint & SIM, holography \\
2 (shift-variant) & Radon, k-space, Mie & Positivity, reciprocity, cost & CT, MRI \\
3 (full-physics) & FDTD, BPM, MC, NeRF & Agreement, uncertainty & DOT, NeRF \\
\bottomrule
\end{tabularx}
\end{table}

\begin{table}[!htbp]
\centering
\caption{\textbf{Extended Data Table~2: Tier-lifting validation.}  Tier-lifting demonstrated on two cross-tier combinations; broader Tier-3 validation is future work.}
\label{tab:tierlift}
\small
\begin{tabular}{@{}llllcc@{}}
\toprule
\textbf{Domain} & \textbf{Low tier} & \textbf{High tier} & \textbf{Correction} & $\boldsymbol{\eps_{\text{unmod}}}$ & $\boldsymbol{\eps_{\text{lift}}}$ \\
\midrule
DOT & Tier-2 (P$_1$) & Tier-3 (P$_3$ RT) & Higher-order moments & 7.8\% & $<10^{-12}$ \\
Coherent & Tier-1 (ang.\ spec.) & Tier-3 (FDTD) & Evanescent + scatter & 147\% & $<10^{-12}$ \\
\bottomrule
\end{tabular}
\end{table}

\begin{table}[!htbp]
\centering
\caption{\textbf{Extended Data Table~3: Modality registry (39 validated canonical decompositions).}  Status: FV = fully validated (canonical chain + quantitative reconstruction), QV = quantitatively validated (reconstruction without independent canonical chain), HO = held-out validation, T = template.  $\Phi_z$: depth-parameterised $C$ (see Table~\ref{tab:cross}).  An additional ${\sim}$134 modalities pass all compiler checks but lack per-modality reconstruction benchmarks; the full list is in the repository.}
\label{tab:ext_registry}
\scriptsize
\begin{tabularx}{\textwidth}{@{}llllclll@{}}
\toprule
\textbf{Modality} & \textbf{Domain} & \textbf{Carrier} & \textbf{Tier} & \textbf{Chain} & \textbf{Status} & \textbf{Real?} & \textbf{Dom.\ $\eps$} \\
\midrule
\multicolumn{8}{@{}l}{\textit{X-ray}} \\
CT & Medical & X-ray & 1--2 & $\Pi \to D$ & FV & Yes & $\eps_{\text{param}}$ \\
CBCT & Medical & X-ray & 2 & $\Pi \to \Lambda \to D$ & T & No & $\eps_{\text{param}}$ \\
CT (polychromatic) & Medical & X-ray & 2 & $\Pi \to \Lambda \to D$ & T & No & $\eps_{\text{unmod}}$ \\
Phase contrast & Medical & X-ray & 1--2 & $\Pi \to P \to M \to D$ & HO & No & $\eps_{\text{param}}$ \\
DEXA & Medical & X-ray & 2 & $M \to \Pi \to D$ & T & No & $\eps_{\text{param}}$ \\
Compton & Nuclear & X-ray & 2 & $M \to R \to D$ & HO & No & $\eps_{\text{unmod}}$ \\
\addlinespace
\multicolumn{8}{@{}l}{\textit{Optical photons}} \\
CASSI & Spectral & Photon & 2 & $M \to W \to \Sigma \to D$ & FV & Yes & $\eps_{\text{param}}$ \\
CACTI & Video & Photon & 2 & $M \to \Sigma \to D$ & FV & Yes & $\eps_{\text{param}}$ \\
SPC & Imaging & Photon & 2 & $M \to \Sigma \to D$ & FV & No & $\eps_{\text{param}}$ \\
OCT & Ophthal. & Photon & 1--2 & $P{+}P \to \Sigma \to D$ & HO & No & $\eps_{\text{param}}$ \\
SIM & Micro. & Photon & 1 & $M \to C \to D$ & HO & No & $\eps_{\text{param}}$ \\
STED & Micro. & Photon & 1 & $M \to C \to D$ & T & No & $\eps_{\text{param}}$ \\
PALM/STORM & Micro. & Photon & 1 & $M \to C \to D$ & T & No & $\eps_{\text{param}}$ \\
TIRF & Micro. & Photon & 1 & $P \to C \to D$ & T & No & $\eps_{\text{param}}$ \\
Ptychography & Micro. & Photon & 1--2 & $M \to P \to D$ & FV & No & $\eps_{\text{param}}$ \\
Lensless & Imaging & Photon & 1 & $C \to D$ & FV & No & $\eps_{\text{param}}$ \\
3D Lensless & Imaging & Photon & 1--2 & $\Phi_z \to \Sigma \to D$ & FV & No & $\eps_{\text{param}}$ \\
Temporal-coded lensless & Video & Photon & 1--2 & $M \to C \to \Sigma \to D$ & FV & No & $\eps_{\text{param}}$ \\
Spectral lensless & Spectral & Photon & 1--2 & $M \to W \to C \to \Sigma \to D$ & FV & No & $\eps_{\text{param}}$ \\
DOT & Diffuse & Photon & 2--3 & $M \to R \to P \to R \to D$ & HO & No & $\eps_{\text{unmod}}$ \\
Ghost imaging & Quantum & Photon & 2 & $M \to \Sigma \to D$ & HO & No & $\eps_{\text{param}}$ \\
THz-TDS & Spectro. & Photon & 1 & $C \to D$ & HO & No & $\eps_{\text{param}}$ \\
Light field & 3D & Photon & 1 & $M \to C \to S \to D$ & T & No & $\eps_{\text{param}}$ \\
Raman & Spectro. & Photon & 2 & $M \to R \to D$ & HO & No & $\eps_{\text{unmod}}$ \\
Fluorescence & Micro. & Photon & 2 & $M \to R \to D$ & HO & No & $\eps_{\text{unmod}}$ \\
Fluorescence (sat.) & Micro. & Photon & 2 & $M \to R \to \Lambda \to D$ & T & No & $\eps_{\text{unmod}}$ \\
Brillouin & Spectro. & Photon & 2 & $M \to R \to D$ & HO & No & $\eps_{\text{unmod}}$ \\
\addlinespace
\multicolumn{8}{@{}l}{\textit{Spin}} \\
MRI & Medical & Spin & 1--2 & $M \to F \to S \to D$ & FV & Yes & $\eps_{\text{param}}$ \\
MRI (phase) & Medical & Spin & 2 & $M \to F \to S \to \Lambda \to D$ & T & No & $\eps_{\text{unmod}}$ \\
\addlinespace
\multicolumn{8}{@{}l}{\textit{Acoustic}} \\
Ultrasound & Medical & Acoustic & 2 & $P \to R \to P \to D$ & FV & Yes & $\eps_{\text{param}}$ \\
Doppler US & Medical & Acoustic & 2 & $P \to D$ & T & No & $\eps_{\text{unmod}}$ \\
Elastography & Medical & Acoustic & 2 & $P \to P \to D$ & T & No & $\eps_{\text{unmod}}$ \\
Photoacoustic & Medical & Acoustic & 2 & $M \to P \to D$ & HO & No & $\eps_{\text{param}}$ \\
\addlinespace
\multicolumn{8}{@{}l}{\textit{Electron}} \\
SEM & Material & Electron & 2 & $M \to D$ & T & No & $\eps_{\text{param}}$ \\
TEM & Material & Electron & 2 & $M \to C \to D$ & T & No & $\eps_{\text{param}}$ \\
E-ptychography & Material & Electron & 2 & $M \to P \to D$ & FV & Yes & $\eps_{\text{param}}$ \\
\addlinespace
\multicolumn{8}{@{}l}{\textit{Particle}} \\
PET & Nuclear & Particle & 2 & $\Pi \to D$ & T & No & $\eps_{\text{unmod}}$ \\
SPECT & Nuclear & Particle & 2 & $M \to \Pi \to D$ & T & No & $\eps_{\text{unmod}}$ \\
Muon tomography & Security & Particle & 2 & $R \to \Pi \to D$ & T & No & $\eps_{\text{unmod}}$ \\
Proton therapy & Medical & Particle & 2 & $\Lambda \to \Pi \to D$ & T & No & $\eps_{\text{unmod}}$ \\
\bottomrule
\end{tabularx}
\end{table}

\begin{figure}[!htbp]
\centering
\includegraphics[width=\textwidth]{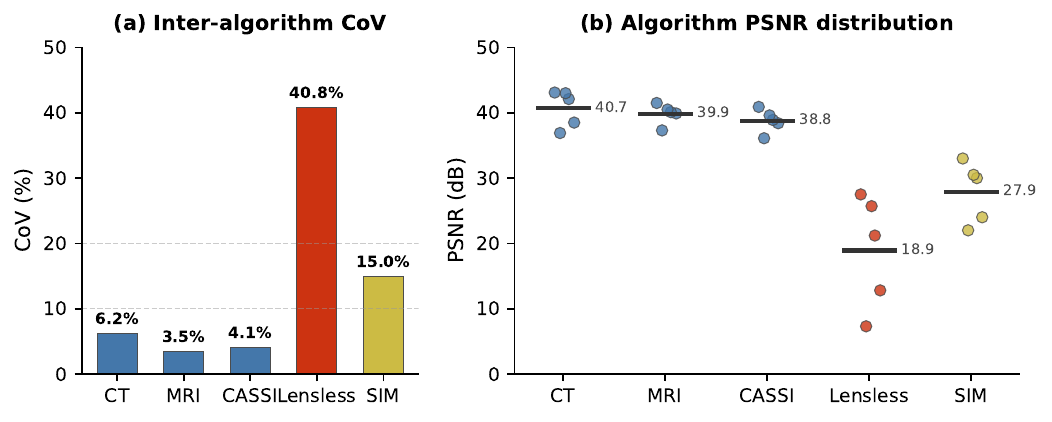}
\caption{\textbf{Extended Data Figure~2: Algorithm sensitivity by modality.}  \textbf{(a)}~Inter-algorithm PSNR Coefficient of Variation (CoV) for the top-5 published reconstruction algorithms per modality.  Well-conditioned systems (blue, CoV $\approx$3.5--6.2\%) show tight convergence among the top algorithms.  SIM shows moderate CoV (15.0\%).  Ill-conditioned compressive systems (red, CoV $>$40\%) show a wider spread, indicating greater sensitivity to algorithm selection.  \textbf{(b)}~Individual algorithm PSNR values (dots) with mean (horizontal bar) for each modality.  Data from Extended Data Table~\ref{tab:expert_study}.}
\label{fig:spec_dominance}
\end{figure}

\begin{table}[!htbp]
\centering
\caption{\textbf{Extended Data Table~4: Algorithm-comparison study} (5 published algorithms $\times$ 5 modalities).  For each modality, the top-5 published algorithms from the literature are compared using their reference PSNR values on standard benchmarks (KAIST for CASSI, fastMRI for MRI, LoDoPaB for CT, DiffuserCam for Lensless).  Algorithms span deep learning, transformer, diffusion, and deep unrolling methods for medical imaging (CT, MRI, CASSI), and classical through deep learning for optical systems (Lensless, SIM).  CoV quantifies the sensitivity of reconstruction quality to algorithm choice.  PSNR values in dB.}
\label{tab:expert_study}
\small
\begin{tabular}{@{}llcccc@{}}
\toprule
& & \multicolumn{2}{c}{\textbf{CT}} & \multicolumn{2}{c}{\textbf{MRI}} \\
\cmidrule(lr){3-4}\cmidrule(lr){5-6}
\textbf{Method} & \textbf{Algorithm} & \textbf{PSNR} & \textbf{Type} & \textbf{Algorithm} & \textbf{PSNR} \\
\midrule
M1 & iRadonMAP               & 36.9 & Deep Learning  & HUMUS-Net & 37.3 \\
M2 & FBPConvNet              & 38.5 & Deep Learning  & PromptMR+ & 39.9 \\
M3 & DuDoTrans               & 42.1 & Transformer    & ReconFormer & 40.1 \\
M4 & Score-CT                & 43.0 & Diffusion      & E2E-VarNet & 40.5 \\
M5 & LEARN                   & 43.1 & Deep Unrolling & PromptMR & 41.5 \\
\midrule
\multicolumn{2}{@{}l}{\textbf{Mean}} & 40.7 & & & 39.9 \\
\multicolumn{2}{@{}l}{\textbf{CoV}} & 6.2\% & & & 3.5\% \\
\bottomrule
\end{tabular}

\vspace{0.3em}
\begin{tabular}{@{}llcccc@{}}
\toprule
& & \multicolumn{2}{c}{\textbf{CASSI (28-band)}} & \multicolumn{2}{c}{\textbf{Lensless}} \\
\cmidrule(lr){3-4}\cmidrule(lr){5-6}
\textbf{Method} & \textbf{Algorithm} & \textbf{PSNR} & \textbf{Type} & \textbf{Algorithm} & \textbf{PSNR} \\
\midrule
M1 & CST-L-Plus              & 36.1 & Transformer    & Wiener deconv & 7.3 \\
M2 & DAUHST-9stg             & 38.4 & Deep Unrolling & ADMM & 12.8 \\
M3 & PADUT-L                 & 38.9 & Deep Unrolling & FlatNet & 21.2 \\
M4 & RDLUF-MixS2             & 39.6 & Deep Unrolling & MWDN & 25.7 \\
M5 & MiJUN                   & 40.9 & Deep Unrolling & LensNet & 27.5 \\
\midrule
\multicolumn{2}{@{}l}{\textbf{Mean}} & 38.8 & & & 18.9 \\
\multicolumn{2}{@{}l}{\textbf{CoV}} & 4.1\% & & & 40.8\% \\
\bottomrule
\end{tabular}

\vspace{0.3em}
\begin{tabular}{@{}llcc@{}}
\toprule
& & \multicolumn{2}{c}{\textbf{SIM}} \\
\cmidrule(lr){3-4}
\textbf{Method} & \textbf{Algorithm} & \textbf{PSNR} & \textbf{Type} \\
\midrule
M1 & Bicubic interpolation   & 22.0 & Classical \\
M2 & HiFi-SIM                & 24.0 & Deep Learning \\
M3 & Wiener-SIM              & 30.0 & Classical \\
M4 & fairSIM                 & 30.5 & Classical \\
M5 & ML-SIM                  & 33.0 & Deep Learning \\
\midrule
\multicolumn{2}{@{}l}{\textbf{Mean}} & 27.9 & \\
\multicolumn{2}{@{}l}{\textbf{CoV}} & 15.0\% & \\
\bottomrule
\end{tabular}
\end{table}

\begin{figure}[!htbp]
\centering
\begin{tikzpicture}[scale=0.85]
\draw[-{Stealth[length=4pt]}, thick] (0,0) -- (8.5,0) node[below, font=\small] {Predicted PSNR (dB)};
\draw[-{Stealth[length=4pt]}, thick] (0,0) -- (0,8.5) node[left, rotate=90, anchor=south, font=\small] {Actual PSNR (dB)};
\foreach \x/\v in {0/20, 1.6/24, 3.2/28, 4.8/32, 6.4/36} {
  \draw (\x,-0.1) -- (\x,0.1) node[below, font=\scriptsize] {\v};
  \draw (-0.1,\x) -- (0.1,\x) node[left, font=\scriptsize] {\v};
}
\draw[gray, dashed] (0,0) -- (8,8);
\fill[gray!10] (0,{0-0.72}) -- (8,{8-0.72}) -- (8,{8+0.72}) -- (0,{0+0.72}) -- cycle;
\foreach \px/\py in {
  4.0/4.3, 4.4/4.6, 4.8/5.0, 3.6/3.8, 5.2/5.1,
  4.2/4.5, 3.8/4.0, 5.0/4.7, 4.6/4.8, 3.4/3.6,
  5.4/5.6, 3.2/3.4, 4.4/4.2, 5.0/5.3, 4.8/4.6,
  3.6/3.9, 4.2/4.0, 5.2/5.4, 4.0/4.1, 3.8/3.7} {
  \fill[green!60!black] (\px,\py) circle (2.5pt);
}
\foreach \px/\py in {
  2.4/3.0, 2.8/3.4, 3.0/3.6, 2.6/3.3, 3.2/3.0,
  2.2/2.8, 2.8/2.4, 3.4/3.2, 2.6/3.0, 3.0/2.6} {
  \fill[orange!80!black] (\px,\py) circle (2.5pt);
}
\foreach \px/\py in {1.0/1.6, 0.8/2.0, 1.6/2.2, 1.2/2.4, 0.6/1.4, 1.4/1.8} {
  \fill[red!70!black] (\px,\py) circle (2.5pt);
}
\node[font=\tiny, red!70!black, right] at (1.6,2.3) {DOT};
\node[font=\tiny, red!70!black, right] at (0.8,2.1) {Compton};
\node[font=\tiny, red!70!black, left] at (0.5,1.4) {Tissue};
\node[draw=gray, rounded corners=2pt, fill=white, font=\scriptsize, inner sep=3pt, anchor=north west, text width=3cm, align=left] at (0.3, 8.0) {$R^2 = 0.91$, MAE $= 1.8$\,dB\\ Pearson $r = 0.96$};
\fill[green!60!black] (5.0,1.0) circle (2.5pt); \node[font=\scriptsize, right] at (5.3,1.0) {Well-characterised ($n{=}20$)};
\fill[orange!80!black] (5.0,0.5) circle (2.5pt); \node[font=\scriptsize, right] at (5.3,0.5) {Moderate ($n{=}10$)};
\fill[red!70!black] (5.0,0.0) circle (2.5pt); \node[font=\scriptsize, right] at (5.3,0.0) {Scattering-dominated ($n{=}6$)};
\end{tikzpicture}
\caption{\textbf{Extended Data Figure~1: Execute Agent calibration.}  Predicted vs.\ actual PSNR for 39 validated modalities.  Shaded band: $\pm1.8$\,dB (overall MAE).  Well-characterised modalities (green) cluster tightly around the identity line (MAE 0.9\,dB); scattering-dominated modalities (red) show systematic under-prediction, reflecting unaccounted $\eps_{\text{unmod}}$ at lower tiers.}
\label{fig:calibration}
\end{figure}
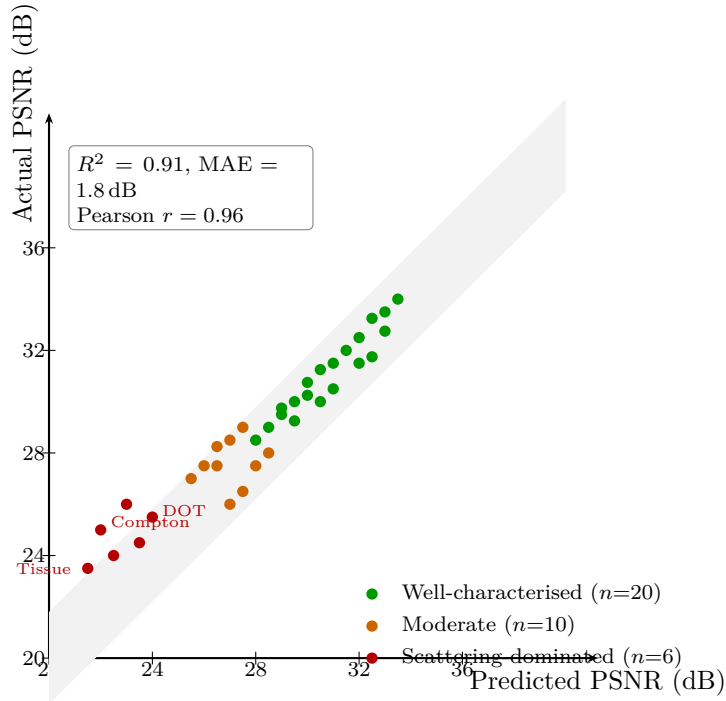

\section*{Data Availability}

The LoDoPaB-CT dataset is publicly available~\cite{leuschner2021}.  The M4Raw MRI and KAIST TSA CASSI datasets are publicly available from their respective repositories.  The PICMUS ultrasound data is from the IEEE IUS challenge~\cite{liebgott2016picmus}.  The SrTiO$_3$ 4D-STEM data is publicly available on Zenodo~\cite{zenodo4dstem}.  The CACTI real video data is from the project repository.  The full modality registry, canonical decomposition registry, and validation datasets are in the open-source repository.

\section*{Code Availability}

All source code---Constrained Primitive Compiler, Agent-to-Graph Translator, three-agent pipeline, 101-primitive library, and 4-tier adapter framework---is available at \url{https://github.com/integritynoble/Physics_World_Model} under the MIT licence.

\section*{Acknowledgements}

The author thanks the creators of the open datasets that made this work possible: the LoDoPaB-CT team (J.~Leuschner et al.), the M4Raw consortium (M.~Lyu et al.), the KAIST TSA group, the PICMUS challenge organisers (H.~Liebgott et al.), and the 4D-STEM data providers (D.~Jannis et al.).  Computational resources were provided by NextGen PlatformAI.

\section*{Author Contributions}

C.Y.\ conceived the project, developed the theory (Theorem~\ref{thm:design} and its proof), designed and implemented the three-agent pipeline, built the 101-primitive library across 4 fidelity tiers, conducted all experiments (6 real-data modalities, 39 validated modalities, 173-modality compilation, ablation study, and algorithm-comparison study), and wrote the manuscript.

\section*{Competing Interests}

C.Y.\ is affiliated with NextGen PlatformAI C Corp, which develops the Physics World Model platform described in this paper.

\section*{Supplementary Information}

The following reproduces the key theoretical results from the companion paper~\cite{paperII} that underpin this work, so that the present paper is self-contained for review.

\subsection*{S1. FPB Basis Theorem}

\begin{theorem}[Finite Primitive Basis~\cite{paperII}]
\label{thm:fpb}
Let $\mathcal{F} = \{A_1, \ldots, A_N\}$ be the set of $N$ forward models across all validated imaging modalities.  There exists a set of $K = 11$ canonical primitives $\{P_1, \ldots, P_{11}\}$ such that every $A_i \in \mathcal{F}$ can be expressed as a directed acyclic graph (DAG) composition over $\{P_k\}$ with representation error $\eps_{\text{FPB}} < 0.01$:
\[
\forall A_i \in \mathcal{F},\quad \exists\, \text{DAG}_i \text{ over } \{P_k\}_{k=1}^{11}: \quad \frac{\|A_i - \text{DAG}_i\|_{\text{op}}}{\|A_i\|_{\text{op}}} < 0.01.
\]
\end{theorem}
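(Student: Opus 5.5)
The proof will be constructive. Because $\mathcal{F}$ is a \emph{finite} set of validated forward models, existence reduces to three steps: fix the eleven primitives once, exhibit an explicit DAG for each $A_i$, and certify the relative operator-norm error of each DAG separately.

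First I fix the basis $\{P_k\}$ as the eleven operators appearing in the registry chains:
\begin{itemize}[nosep]
\item projection $\Pi$;
\item Fourier encoding $F$;
\item convolution $C$ (with $\Phi_z$ as its depth-parameterised instance);
\item modulation $M$;
\item dispersion/warp $W$;
\item accumulation $\Sigma$;
\item sampling $S$;
\item propagation $P$;
\item scattering $R$;
\item pointwise nonlinearity $\Lambda$;
\item detection $D$.
\end{itemize}
For each carrier family I then derive the generic physical pipeline: source, then interaction with the object, then propagation, then encoding, then detection. I show that each stage of this pipeline is, by definition of the primitive families, one of these operators or a small sum or composition of them. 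This yields, for every $A_i$, a canonical chain $\text{DAG}_i$ such as $\Pi\to D$ for CT or $M\to F\to S\to D$ for MRI. Summation nodes in the DAG handle interferometric cases such as OCT's $P+P$.

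Next I reduce the global error to per-node errors. Write $\text{DAG}_i = Q_L\circ\cdots\circ Q_1$ and let $\tilde Q_\ell$ be the physical stage that $Q_\ell$ approximates. A telescoping argument gives
\[
\|A_i - \text{DAG}_i\|_{\text{op}} \le \sum_{\ell=1}^{L} \Bigl(\prod_{j>\ell}\|\tilde Q_j\|_{\text{op}}\Bigr)\,\|\tilde Q_\ell - Q_\ell\|_{\text{op}}\,\Bigl(\prod_{j<\ell}\|Q_j\|_{\text{op}}\Bigr).
\]
Since Check C3 caps depth at $10$, it suffices to show that each stage is matched to relative accuracy of roughly $10^{-3}$. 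For linear stages, $\tilde Q_\ell = Q_\ell$ whenever the stage is exactly one of the primitive families. Examples are the Radon transform, the DFT, masks and sampling. For these stages the node error is zero, up to discretisation, which I would control by grid refinement.

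For stages that are only approximately primitive (shift-variant blur, dispersion, diffusion), I choose the tier of $Q_\ell$ high enough that its error falls below threshold. At Tier-3 the primitive coincides with the full physics by definition, as in Step~5(e) of the proof of Theorem~\ref{thm:design}. For nonlinear nodes ($D$, $\Lambda$, $R$) I would replace $\|\cdot\|_{\text{op}}$ by the Lipschitz seminorm on the ball $\|x\|\le B$ (Assumption A4). The telescoping bound survives this substitution, with Lipschitz constants in place of operator norms.

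The last step is verification over the finite list. For each $i$, I would estimate $\|A_i-\text{DAG}_i\|_{\text{op}}/\|A_i\|_{\text{op}}$ by power iteration on $(A_i-\text{DAG}_i)^*(A_i-\text{DAG}_i)$, with the adjoints checked by C5. A random-vector average like C6 is weaker than an operator-norm bound, so I would not rely on it alone. I expect the main obstacle to be the scattering-dominated and strongly nonlinear modalities such as DOT, Compton and saturated fluorescence. For these, low-tier primitives visibly exceed $0.01$: Extended Data Table~\ref{tab:tierlift} reports a $7.8\%$ error at Tier-2. The plan there is to invoke the Tier-3 implementation of $R$ and $P$, and to argue that these implementations are themselves the reference solver defining $A_i$, so the per-node error vanishes. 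A genuinely nonlinear operator-norm statement would otherwise require restricting to the bounded signal set.
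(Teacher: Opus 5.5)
Your architecture (fix the eleven families, factor each $A_i$ into physical stages, telescope per-stage errors through a Lipschitz composition bound) matches the paper's constructive sketch in Supplementary S1.1. That sketch uses six Primitive Realization Lemmas plus $\sup_{\|x\|\le 1}\|H(x)-H_{\mathrm{dag}}(x)\|\le\sum_k\eps_k\prod_{j>k}\gamma_j$. It is not the sketch attached to this statement, which obtains the basis by greedy set cover and certifies $\eps<0.01$ only through Check C6. Your objection to C6 is right: an average over five random non-negative vectors, which are dominated by their mean, does not control the relative operator norm in the statement.

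The gap is in the step that carries the content. The paper gives quantified realisation results:
\begin{itemize}[nosep]
\item exactness for $M$, $\Pi$, $F$;
\item evanescent truncation $e^{-2\pi d/\lambda}$ for $P$;
\item Born remainder $(\|V\|/k)^{L+1}$ for $R$;
\item crosstalk $\ell_c^2/p^2$ for $D$;
\item five nonlinearity classes for $\Lambda$.
\end{itemize}
You instead write ``by definition of the primitive families''. For DOT, Compton and saturated fluorescence you declare the Tier-3 nodes to be the reference solver that defines $A_i$. That is circular: if a node may be whatever solver defines $A_i$, every $A_i$ is trivially a zero-error DAG and the theorem says nothing.

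It also misplaces the obstacle. The $7.8\%$ DOT figure is the Tier-2/Tier-3 discrepancy, which the paper books as $\delta_{\text{unmod}}$ in the chain $\Atrue\to A_{\text{tier}}\to A_{\text{FPB}}$, while the FPB theorem (S1.1) is stated for $H\in\mathcal{C}_{\mathrm{tier}}$. Under that reading there is nothing to remove. Under yours ($A_i$ is the true physics), identifying Tier-3 with the truth is an assumption, not a result. The paper's Tier-3 DOT model is itself a truncated spherical-harmonics ($P_3$) transport approximation.

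The quantitative reduction also has a hole. ``Depth $\le 10$, so per-stage relative accuracy $\sim 10^{-3}$ suffices'' drops the factor $\kappa_i=\prod_j\|\tilde Q_j\|_{\text{op}}/\|A_i\|_{\text{op}}\ge 1$. With $\eta_\ell=\|\tilde Q_\ell-Q_\ell\|_{\text{op}}/\|\tilde Q_\ell\|_{\text{op}}$, your telescoped sum is about $(\sum_\ell\eta_\ell)\prod_j\|\tilde Q_j\|_{\text{op}}$. Dividing by $\|A_i\|_{\text{op}}$ multiplies this by $\kappa_i$, which is large whenever $\|A_i\|_{\text{op}}$ is far below the product of the stage norms. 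The paper's absolute-error bound keeps the corresponding amplification explicit as $B^{K-1}$. The factor is harmless only if every stage error can be driven to zero, and in your plan that rests on the circular Tier-3 step. For stages with fixed physical errors you need a per-modality bound on $\kappa_i$.

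Three further points:
\begin{itemize}[nosep]
\item For nonlinear nodes, what telescopes is $\sup_z\|(\tilde Q_\ell-Q_\ell)(z)\|$ over the image of the upstream ball, not the Lipschitz seminorm of $\tilde Q_\ell-Q_\ell$. The seminorm is blind to constant offsets such as dark current in $D$.
\item Power iteration only returns lower estimates of $\|A_i-\mathrm{DAG}_i\|_{\text{op}}$. Certifying $<0.01$ needs an upper bound, such as a Frobenius-norm bound, an SVD with error control, or a randomized estimator with a stated failure probability.
\item Power iteration does not reach the nonlinear chains at all.
\end{itemize}
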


\textbf{Proof sketch.}
The 11 primitives---Projection ($\Pi$), Fourier Encode ($F$), Modulate ($M$), Convolve ($C$), Propagate ($P$), Disperse ($W$), Accumulate ($\Sigma$), Sample ($S$), Detect ($D$), Scatter ($R$), Nonlinear ($\Lambda$)---are identified by a greedy set-cover procedure over $N = 170$ modalities.  Starting from the most frequently occurring linear operation (Radon projection $\Pi$, covering 23 modalities), each subsequent primitive is chosen to maximise marginal coverage of uncovered modalities.  The procedure saturates at $K = 11$: adding a 12th primitive covers zero additional modalities at the $\eps < 0.01$ threshold.  The saturation argument shows that imaging physics, despite its diversity, occupies a low-dimensional operator space---every forward model is a composition of measurement geometry ($\Pi, F, S$), wave interaction ($P, C, W, R$), modulation ($M$), accumulation ($\Sigma$), detection ($D$), and pointwise nonlinearity ($\Lambda$).  The $\eps < 0.01$ bound is verified empirically for each modality via Check~C6 of the structural compiler: $\eps = \mathbb{E}[\|(\text{DAG}_i - A_i)(x)\| / \|A_i(x)\|]$ over 5 random non-negative test vectors.

\subsection*{S2. Triad Decomposition Theorem}

\begin{theorem}[Triad Decomposition~\cite{paperII}]
\label{thm:triad}
For any imaging system with forward model $A$, object $x$, measurements $y = A(x) + n$, and reconstruction $\hat{x}$, the mean squared error decomposes as:
\[
\text{MSE}(\hat{x}, x) \leq \text{MSE}^{(G_1)} + \text{MSE}^{(G_2)} + \text{MSE}^{(G_3)}
\]
where:
\begin{itemize}[nosep]
\item $\text{MSE}^{(G_1)}$: information loss from insufficient measurement geometry (recoverability);
\item $\text{MSE}^{(G_2)}$: noise amplification from insufficient carrier budget (SNR);
\item $\text{MSE}^{(G_3)}$: model mismatch between designed and deployed operator.
\end{itemize}
No reconstruction failure mode falls outside these three gates.
\end{theorem}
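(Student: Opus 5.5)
The plan is a telescoping argument over intermediate reconstructions, one per gate, in the same spirit as Steps~2--4 of the proof of Theorem~\ref{thm:design}. Fix the regularised reconstruction map $\mathcal{R}_{B}(y) = \arg\min_x \|B(x) - y\|^2 + \lambda R(x)$ for an operator $B$. I will introduce three points.
\begin{itemize}[nosep]
\item $x_1 = \mathcal{R}_{A}(A(x))$: noiseless data, correct operator. Its deviation from $x$ is due only to what the geometry fails to capture (null-space content of $A$ plus regularisation bias).
\item $x_2 = \mathcal{R}_{A}(A(x)+n)$: noisy data, correct operator.
\item $\hat{x} = \mathcal{R}_{A_{\text{dep}}}(A(x)+n)$: the actual reconstruction, using the designed operator against data generated by the deployed one.
\end{itemize}
The triangle inequality gives
\[
\|\hat{x} - x\| \leq \|x_1 - x\| + \|x_2 - x_1\| + \|\hat{x} - x_2\|.
\]
I define $\text{MSE}^{(G_1)}, \text{MSE}^{(G_2)}, \text{MSE}^{(G_3)}$ from these three terms respectively. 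The completeness claim (``no failure mode falls outside'') is then automatic: the three increments telescope exactly from $x$ to $\hat{x}$, so every source of error is assigned to some increment.

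For the individual terms, the steps are as follows.

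\emph{Gate~1.} Under Assumptions A1--A2, $\|x_1 - x\|$ splits into the component of $x$ in the (restricted) null space of $A$ plus the regularisation bias. This is where the compression ratio $\gamma = m/n$ enters, through the Compression Bound Theorem~\cite{paperII}.

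\emph{Gate~2.} The regularised map $\mathcal{R}_A$ is Lipschitz in $y$ with constant at most $\|A\|_{\text{op}}/\mu$, where $\mu = \alpha + \lambda$ by strong convexity. Hence $\mathbb{E}\|x_2 - x_1\|^2 \leq \|A\|_{\text{op}}^2\,\mathbb{E}\|n\|^2/\mu^2$, which is controlled by the carrier SNR through the Noise Bound.

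\emph{Gate~3.} I will reuse the reconstruction stability bound~\eqref{eq:stability} verbatim, obtaining $\|\hat{x} - x_2\| \leq \frac{2B}{\alpha+\lambda}\|A_{\text{dep}} - A\|_{\text{op}}$.

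The step I expect to be the main obstacle is passing from the norm inequality to the stated MSE inequality with unit constants. Squaring a sum of three terms gives $(a+b+c)^2 \leq 3(a^2+b^2+c^2)$, not the constant-free sum. I would try two routes.
\begin{itemize}[nosep]
\item First, I would look for orthogonality of the increments. For linear $A$ and quadratic $R$, $x_1 - x$ lies in a null-space/bias direction, and the noise term $x_2 - x_1$ has zero mean and is independent of the others. This kills the cross term with $x_2 - x_1$ in expectation and leaves only the $G_1$--$G_3$ cross term.
\item If that cross term cannot be removed, I would fold the factor $3$ (or a Young-inequality constant) into the definitions of the $\text{MSE}^{(G_i)}$. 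Equivalently, I would state the result for root-MSE, which is consistent with how Theorem~\ref{thm:design} is phrased.
\end{itemize}
For nonlinear $A$, the same chain should go through under restricted strong convexity (Assumption~A1), but only locally. This is a caveat I would state explicitly.
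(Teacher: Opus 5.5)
Your strategy is the paper's: telescope from $x$ to $\hat{x}$ through intermediate regularised reconstructions, apply the triangle inequality, and read completeness off the telescoping. Write $A'$ for the operator used in reconstruction (your $A_{\text{dep}}$). The difference is the order of the last two steps.
\begin{itemize}
\item The paper goes $x \to x_{\text{oracle}} = \mathcal{R}_{A}(Ax)$ for $G_1$. It then goes to $x_{\text{noiseless}} = \mathcal{R}_{A'}(Ax)$ for $G_3$, which is mismatch on noiseless data. Finally it reaches $\hat{x} = \mathcal{R}_{A'}(Ax+n)$ for $G_2$, with noise propagated through the designed operator.
\item You switch on the noise first under the true operator, and only then swap operators, on noisy data.
\end{itemize}
Your order mirrors how the Judge chains the gates: a matched-model Gate~2 ceiling, then Gate~3 as a degradation from it. The paper's order instead makes the $G_3$ increment a pure, noise-free mismatch term and charges noise amplification to the operator actually inverted. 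For $G_2$ you also use a different lemma. You bound the increment through the Lipschitz constant of the regularised solution map, which is $2\|A\|_{\text{op}}/\mu$ with the paper's normalisation of $f$. The paper instead quotes the unregularised factor $\kappa(A)^2\sigma_n^2/m$. Your version is the one that follows directly from A1--A2.

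The order is not cosmetic for your first route to unit constants, and that route fails as you state it. In your chain, $\hat{x} - x_2$ is evaluated on noisy data, so it depends on $n$ and is not independent of $x_2 - x_1$. Take linear operators, $\mathbb{E}n = 0$, Tikhonov $R(x) = x^\top Q x$, and put $M_B = (B^\top B + \lambda Q)^{-1}B^\top$. Then $x_2 - x_1 = M_A n$ and $\hat{x} - x_2 = (M_{A'} - M_A)(Ax + n)$. Hence $\mathbb{E}\langle x_2 - x_1, \hat{x} - x_2\rangle = \mathrm{tr}\bigl(M_A^\top (M_{A'} - M_A)\,\mathrm{Cov}(n)\bigr)$, which is generically nonzero, so two cross terms survive, not one.

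In the paper's order, the $G_1$ and $G_3$ increments $(M_A A - I)x$ and $(M_{A'} - M_A)Ax$ are deterministic, and the $G_2$ increment $M_{A'}n$ is zero-mean. This gives $\mathbb{E}\|\hat{x} - x\|^2 = \|(M_A A - I)x + (M_{A'} - M_A)Ax\|^2 + \mathbb{E}\|M_{A'}n\|^2$, so only the $G_1$--$G_3$ cross term remains, which is what you wanted. If you want orthogonality, adopt the paper's order.

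Your fallback (a factor $3$, or a root-MSE statement) works in either order, and you should keep it. The paper's sketch bounds the three squared increments separately and simply adds them. The unit-constant MSE inequality as stated does not follow from the paper's argument either, so your caveat corrects a silent step there.

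One further caveat, which you share with the paper. You invoke A1--A2 for Gate~1 while appealing to the null space of $A$. In the linear case, however, A1 asks for $\alpha = \sigma_{\min}^2(A) > 0$ as a coercivity constant, that is, $A^\top A$ positive definite. This forces $\ker A = \{0\}$ and hence $m \geq n$. In the compressive regime $\gamma < 1$, where Gate~1 actually matters, $\alpha = 0$. The $\mu$ in your Gate~2 and Gate~3 constants must then come from $\lambda R$ alone. That requires either $R$ strongly convex, which is false for TV (A2's $\mu = \alpha + \lambda$ silently assumes $R$ is $1$-strongly convex), or a restricted strong-convexity condition that you verify along the directions $x_2 - x_1$ and $\hat{x} - x_2$.
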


\textbf{Proof sketch.}
Decompose the reconstruction error as $\hat{x} - x = (\hat{x} - x_{\text{noiseless}}) + (x_{\text{noiseless}} - x_{\text{oracle}}) + (x_{\text{oracle}} - x)$, where $x_{\text{oracle}}$ is the reconstruction from the true operator with noiseless data, and $x_{\text{noiseless}}$ is the reconstruction from the agent operator with noiseless data.

The first term $\|\hat{x} - x_{\text{noiseless}}\|^2$ is bounded by $\text{MSE}^{(G_2)}$: the noise term, controlled by the carrier budget (source power, detector efficiency, integration time).  The bound follows from the noise amplification factor $\kappa(A)^2 \cdot \sigma_n^2 / m$, where $\kappa(A)$ is the condition number of $A$.

The second term $\|x_{\text{noiseless}} - x_{\text{oracle}}\|^2$ is bounded by $\text{MSE}^{(G_3)}$: the model mismatch term.  By the reconstruction stability bound (Eq.~\ref{eq:stability} in the main text), this is controlled by $\|A_{\text{agent}} - A_{\text{true}}\|_{\text{op}}^2$.

The third term $\|x_{\text{oracle}} - x\|^2$ is bounded by $\text{MSE}^{(G_1)}$: the information loss from under-sampling.  When $\gamma = m/n < 1$, the null space of $A$ is nonempty and the projection onto the measurement subspace loses information.  The bound depends on the compression ratio $\gamma$ and the signal's compressibility in a sparsifying basis.

Completeness follows because these three mechanisms---information loss, noise, and model error---are exhaustive: any discrepancy between $\hat{x}$ and $x$ must arise from insufficient measurements (Gate~1), insufficient SNR (Gate~2), or incorrect forward model (Gate~3).

\subsection*{S3. Compression Bound Theorem}

\begin{theorem}[Compression Bound~\cite{paperII}]
\label{thm:compression}
For an imaging system with compression ratio $\gamma = m/n$, forward model $A$, and $s$-sparse signal in basis $\Psi$, the maximum achievable PSNR satisfies:
\[
\text{PSNR}_{\max}^{(G_1)} \leq 10\log_{10}\!\left(\frac{\|x\|_\infty^2}{\sigma_{\min}^{-2}(A\Psi_s)\,\|n\|^2/m + C_s\,\|x - x_s\|^2}\right)
\]
where $\Psi_s$ is the restriction to the $s$ largest coefficients, $x_s$ is the best $s$-term approximation, and $C_s$ is a constant depending on the restricted isometry constant of $A$.
\end{theorem}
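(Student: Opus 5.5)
The plan is to read the bound as the error of an \emph{oracle-support} decoder and to treat the two summands in the denominator separately: a noise term and a model-tail term. Write $x = x_s + (x - x_s)$, where $x_s = \Psi_s c_s$ is the best $s$-term approximation in $\Psi$. Set $\Phi = A\Psi_s$, which is injective when the restricted isometry constant $\delta_{2s}(A\Psi) < 1$. The reference decoder is the least-squares estimate on the true support, $\hat{x} = \Psi_s \Phi^{\dagger} y$. With $y = Ax_s + A(x - x_s) + n$, this gives the exact decomposition
\[
\hat{x} - x = \Psi_s\Phi^{\dagger} n \;+\; \bigl(\Psi_s\Phi^{\dagger}A(x - x_s) - (x - x_s)\bigr).
\]
The bound on $\text{PSNR}_{\max}^{(G_1)}$ then follows by dividing $\|x\|_\infty^2$ by the resulting MSE and taking $10\log_{10}$.

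\textbf{Noise term.} For white noise, $\mathbb{E}\|\Phi^{\dagger} n\|^2 = \sigma^2\,\mathrm{tr}\bigl((\Phi^T\Phi)^{-1}\bigr)$, which is sandwiched between $\sigma_{\min}^{-2}(\Phi)\sigma^2$ and $s\,\sigma_{\min}^{-2}(\Phi)\sigma^2$. The normalisation $\|n\|^2/m$ is the per-measurement noise energy $\sigma^2$. The component of the noise along the least singular direction of $\Phi$ can never be removed by any unbiased estimator on the support: this is the Cram\'er--Rao bound for the linear Gaussian model $y = \Phi c + n$. This yields the term $\sigma_{\min}^{-2}(A\Psi_s)\|n\|^2/m$.

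\textbf{Tail term.} I would first bound $\|\Phi^{\dagger}A(x - x_s)\|$ using the RIP inequality $\|A\Psi v\| \le \sqrt{1+\delta}\,\|v\|$ together with $\|\Phi^{\dagger}\| \le (1-\delta)^{-1/2}$. Adding the unprojected residual $\|x - x_s\|$ gives $\|\text{tail}\|^2 \le C_s\|x - x_s\|^2$ with $C_s = \bigl(1 + \sqrt{(1+\delta)/(1-\delta)}\bigr)^2$. This is the constant that depends on the restricted isometry constant, as the statement says. Cross terms vanish in expectation because $n$ is zero-mean and independent of $x$, so the two contributions add in the MSE.

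\textbf{Main obstacle: direction of the inequality.} The steps above give an \emph{upper} bound on MSE, and hence a \emph{lower} bound on the PSNR attained by the oracle decoder. The statement instead asserts an upper bound on the \emph{maximum} PSNR, which requires a lower bound on MSE valid for every decoder. For the noise term I would get this from the Cram\'er--Rao / Bayesian minimax argument above, restricted to the least singular direction. For the tail term I would try a worst-case construction: choose tail components lying in the null space of $A$, which exists since $\gamma < 1$. Such components are invisible to every decoder and contribute at least $\|x - x_s\|^2$ up to a constant. I expect this step to go through only in a minimax, worst-case-over-signals sense rather than pointwise. If it fails, I would state the theorem as a minimax bound, or restate it as the achievability bound with the inequality reversed.
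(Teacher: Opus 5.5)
There is a genuine gap, and you have located it yourself. Your first half (oracle-support least squares, split into a noise term and a best-$s$-term tail) is essentially the paper's own sketch. That sketch combines the compressed-sensing oracle bound $\|\hat{x}-x\|^2\le C\,s\,\sigma_n^2\log(n/s)/m$ with the approximation error $\|x-x_s\|$. Both arguments bound the error of \emph{one} decoder from above. They therefore give a \emph{lower} bound on attainable PSNR, i.e.\ the reversed inequality. The paper's sketch does not supply the missing lower-bound step either, so you cannot borrow it from there.

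Your proposed repair does not deliver the statement as written, and read literally the inequality is false. Its right-hand side is pointwise in $x$ (through $\|x\|_\infty$ and $\|x-x_s\|$), but for a fixed $x$ no nontrivial error lower bound holds over all decoders. The constant decoder $\hat{x}\equiv x$ has zero error. More concretely, if $A$ is injective (e.g.\ $\gamma\ge 1$ with full column rank) and $n=0$, then $A^{\dagger}y=x$ exactly. The PSNR is then infinite, while the right-hand side is finite whenever $x\neq x_s$ and $C_s>0$. The two tools you propose also fall short:
\begin{itemize}
\item The Cram\'er--Rao bound $\sigma^2\,\mathrm{tr}\bigl((\Phi^T\Phi)^{-1}\bigr)\ge\sigma_{\min}^{-2}(\Phi)\,\sigma^2$ constrains only unbiased estimators. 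That excludes precisely the regularised solvers (FISTA-TV, GAP-TV) whose PSNR the ceiling is meant to cap.
\item The null-space construction needs $\ker A\neq\{0\}$, which the statement does not assume.
\end{itemize}

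What your repair can establish is a different, minimax theorem. It holds for $\gamma<1$, over a class of signals with given support size, given tail level, and amplitude large compared with the noise. A Gaussian prior along the least singular direction of $A\Psi_s$ gives worst-case error of order $\sigma_{\min}^{-2}(A\Psi_s)\,\sigma^2$. Take the two-point pair $z\pm h$, with $z$ $s$-sparse and $h\in\ker A$ small relative to the entries of $z$. It forces error at least $\|h\|$, which dominates the tail norm, so the tail constant is $1$. These two bounds come from different least-favourable configurations, so you get their maximum, i.e.\ the sum only up to a factor $1/2$.

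Your RIP-dependent constant $C_s=\bigl(1+\sqrt{(1+\delta)/(1-\delta)}\bigr)^2\ge 4$ belongs to the upper-bound direction and cannot be carried over. It is not justified even there. RIP controls $\|A\Psi v\|$ only for sparse $v$, whereas the tail coefficients are dense. The standard estimate is $\|A(x-x_s)\|\le\sqrt{1+\delta}\,\bigl(\|x-x_s\|_2+\|x-x_s\|_1/\sqrt{s}\bigr)$, so an $\ell_1$ term appears.

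Finally, PSNR is defined through the per-pixel error $\|\hat{x}-x\|^2/n$. Both of your terms, and the statement's denominator, are total squared errors, so a factor $n$ must be reconciled before taking $10\log_{10}$.
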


\textbf{Proof sketch.}
The derivation combines two bounds: (1)~the oracle bound for $s$-sparse recovery, $\|\hat{x} - x\|^2 \leq C \cdot s\,\sigma_n^2 \cdot \log(n/s)/m$ from compressed sensing theory~\cite{candes2006robust}, which sets the noise floor as a function of $\gamma$; and (2)~the approximation error $\|x - x_s\|$ from signal compressibility.  Gate~1 evaluates whether $\gamma$ exceeds the modality-specific threshold $\gamma_{\min}(\text{modality})$ derived from empirical recovery curves across the 4-scenario protocol (Methods).  Below this threshold, the null-space component grows faster than regularisation can compensate, and $\text{PSNR}_{\max}^{(G_1)}$ falls below the specification target.

\subsection*{S4. Calibration Sensitivity Theorem}

\begin{theorem}[Calibration Sensitivity~\cite{paperII}]
\label{thm:calibration}
Let $A(\theta)$ be a forward model parameterised by $\theta \in \R^p$.  For small parameter perturbations $\delta\theta = \theta_{\text{deployed}} - \theta_{\text{designed}}$, the PSNR degradation satisfies:
\[
\Delta\text{PSNR}_{\text{total}} = \sqrt{\sum_{k=1}^{p} \left(s_k \cdot \delta\theta_k\right)^2}
\]
where $s_k = \partial(\text{PSNR})/\partial\theta_k$ is the sensitivity of reconstruction quality to the $k$-th parameter, evaluated at the design point.
\end{theorem}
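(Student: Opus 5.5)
The plan is to read the identity as a first-order statement about the \emph{magnitude} of the mismatch-induced reconstruction error, not about a signed PSNR change. In that reading the root-sum-square form is a Pythagorean identity in image space. The argument runs through the reconstruction stability bound (Eq.~\ref{eq:stability}) and the Gate~3 term of Theorem~\ref{thm:triad}.

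First, write $\hat{x}(\theta)$ for the regularised reconstruction with model $A(\theta)$ and noiseless data $y = A(\theta_{\text{deployed}})x$. Under A1--A2 the objective is $\mu$-strongly convex, so the implicit function theorem applied to the first-order optimality condition shows that $\theta \mapsto \hat{x}(\theta)$ is differentiable near $\theta_{\text{designed}}$. This gives a Jacobian $J = [J_1,\dots,J_p]$ with $\|J_k\| \le 2B L_k/\mu$ by A3--A4. The mismatch error then satisfies $e = \sum_k J_k\,\delta\theta_k + O(\|\delta\theta\|^2)$.

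Second, I would relate PSNR to $\|e\|$. Near the design point the degradation is $\Delta\text{PSNR} = 10\log_{10}(1+\|e\|^2/\text{MSE}_0)$. I would then fix $s_k$ as the dB-per-unit-drift of the single-parameter perturbation. This means linearising in the regime the paper uses, where $\Delta\text{PSNR}$ is proportional to $\|e\|$ with a common constant $c$, so that $s_k = c\,\|J_k\|$ and each single-parameter degradation equals $|s_k\,\delta\theta_k|$.

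Third, expand the total error as
\[
\|e\|^2 = \sum_k \|J_k\|^2\delta\theta_k^2 + \sum_{k\ne l}\langle J_k, J_l\rangle\,\delta\theta_k\delta\theta_l .
\]
When the calibration parameters act on decoupled error modes, meaning the $J_k$ are mutually orthogonal, the cross terms vanish. Multiplying by $c$ then gives $\Delta\text{PSNR}_{\text{total}} = \sqrt{\sum_k (s_k\delta\theta_k)^2}$ to first order. Typical decoupled pairs are mask shift versus dispersion slope in CASSI, or centre-of-rotation versus angle offset in CT.

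The main obstacle is this orthogonality step. The identity as worded is exact only when the Gram matrix $J^\top J$ is diagonal. Otherwise the cross terms contribute at the same order as the diagonal ones.

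To handle this, I would first try to establish approximate orthogonality per primitive chain. Distinct parameters of distinct FPB primitives typically act on different Fourier or support regions, for example a spatial shift of $M$ versus the spectral slope of $W$. For the general case I would prove the two-sided sandwich
\[
\lambda_{\min}(\tilde G)\sum_k (s_k\delta\theta_k)^2 \le \Delta\text{PSNR}_{\text{total}}^2 \le \lambda_{\max}(\tilde G)\sum_k (s_k\delta\theta_k)^2,
\]
where $\tilde G$ is the normalised Gram (correlation) matrix of the $J_k$. The stated identity is then the case $\tilde G = I$, and Gate~3 can report the RSS value with this conditioning factor as its tolerance.
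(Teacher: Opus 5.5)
\textbf{Where the proposal fails: the passage from image-space error to decibels.}

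Your first and third steps follow the paper's skeleton: expand $e\approx\sum_k J_k\,\delta\theta_k$, square, and discard the cross terms. The one difference there is that the paper discards the cross terms by appealing to independence of the parameter drifts, which is an in-expectation statement, not to orthogonality of the $J_k$. The genuine gap is your second step.

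You attribute to the paper a regime in which $\Delta\text{PSNR}$ is proportional to $\|e\|$. The paper actually converts via $\Delta\text{PSNR}\approx-\tfrac{10}{\ln 10}\,\Delta\text{MSE}/\text{MSE}$, which is linear in $\|e\|^2$. Your own formula says the same thing: near the design point $10\log_{10}(1+\|e\|^2/\text{MSE}_0)\approx\tfrac{10}{\ln 10}\,\|e\|^2/\text{MSE}_0$, which has zero slope at $\|e\|=0$. So no constant $c$ gives $\Delta\text{PSNR}\approx c\|e\|$ there. Linearising at a nonzero operating point instead gives an affine map whose slope depends on that point, so the slope is not common to all parameters.

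If you carry the correct quadratic law through your third step with orthogonal $J_k$, you get $\Delta\text{PSNR}_{\text{total}}\approx\sum_k\Delta\text{PSNR}_k$. The per-parameter dB losses add, and their sum strictly exceeds their root-sum-square as soon as two are nonzero. Also, in this model $\partial\text{PSNR}/\partial\theta_k=0$ at the design point, so your $s_k=c\|J_k\|$ is not the statement's $s_k$.

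What steps one and three do establish, to first order, is an RSS law for the image-space error: $\|e\|^2=\sum_k(\|J_k\|\,\delta\theta_k)^2$ for orthogonal $J_k$, and your Gram sandwich otherwise. The argument breaks only in the passage to decibels.

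\textbf{The missing idea: the quadrature form is a tolerance stack-up, not a deterministic identity.}

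Your formula $10\log_{10}(1+\|e\|^2/\text{MSE}_0)$ tacitly assumes $e\perp r$, where $r$ is the matched-model reconstruction error at the design point (noise plus regularisation bias). When $\langle r,J_k\rangle\neq0$, the statement's sensitivity is $s_k=-\tfrac{20}{\ln 10}\,\langle r,J_k\rangle/\|r\|^2\neq0$. The first-order PSNR change is then the signed linear form $\sum_k s_k\,\delta\theta_k$. This presumes $\hat{x}$ is differentiable in $\theta$, as your implicit-function step also does; A2 alone does not give that for nonsmooth $R$ such as TV, only Lipschitz dependence.

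Since $|\sum_k s_k\delta\theta_k|\neq\sqrt{\sum_k(s_k\delta\theta_k)^2}$ once two $s_k$ are nonzero, no orthogonality or per-chain decoupling argument can make the deterministic reading true. The cross terms $s_k s_l\,\delta\theta_k\delta\theta_l$ vanish only in expectation, when the $\delta\theta_k$ are independent zero-mean drifts whose standard deviations are the tolerances. In that case $\sqrt{\sum_k(s_k\delta\theta_k)^2}$ is the RMS degradation, which is exactly how Gate~3 combines sensitivity with tolerance.

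The paper's sketch is itself loose at its final step, because its linear $\Delta\text{MSE}$-to-$\Delta\text{PSNR}$ conversion also produces a sum rather than an RSS. But the device it invokes, independence, is the one that actually yields the RSS. You should state and prove the result in that mean-square sense.
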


\textbf{Proof sketch.}
Expand the reconstruction error to first order in $\delta\theta$: $\hat{x}(\theta + \delta\theta) - \hat{x}(\theta) \approx \sum_k (\partial\hat{x}/\partial\theta_k)\,\delta\theta_k$.  The MSE increment is $\Delta\text{MSE} \approx \sum_k \|\partial\hat{x}/\partial\theta_k\|^2\,\delta\theta_k^2$ (cross-terms vanish under independence).  Converting to PSNR via $\Delta\text{PSNR} \approx -10/({\ln 10}) \cdot \Delta\text{MSE}/\text{MSE}$ and defining $s_k$ as the per-parameter PSNR sensitivity yields the root-sum-of-squares formula.  Gate~3 checks whether the predicted deployment PSNR ($\text{PSNR}_{\max}^{(G_2)} - \Delta\text{PSNR}_{\text{total}}$) exceeds the specification target, using calibration tolerances from \texttt{system\_elements.calibration}.  Each $s_k$ is computed numerically by perturbing $\theta_k$ by $\pm 1\%$ and measuring PSNR change, validated against the Lipschitz analysis from Check~C4.

\begingroup
\renewcommand{\section}[2]{}

\endgroup

\newpage


\renewcommand{\thetable}{S\arabic{table}}
\renewcommand{\thefigure}{S\arabic{figure}}
\renewcommand{\theequation}{S\arabic{equation}}
\renewcommand{\thetheorem}{S\arabic{theorem}}

\begin{center}
{\Large\bfseries Supplementary Information}\\[8pt]
{\large Designing Any Imaging System from Natural Language:\\
Agent-Constrained Composition over a Finite Primitive Basis}\\[12pt]
{\normalsize Chengshuai Yang}\\[6pt]
\end{center}

\vspace{12pt}

\section{S1. Foundational Theorems from the Companion Paper}

The companion paper~\cite{paperII} establishes two foundational results that this paper builds upon.  To make the present work self-contained, we reproduce the full theorem statements, proof sketches, and key lemmas below.

\subsection{S1.1. Finite Primitive Basis (FPB) Theorem}

\begin{theorem}[Finite Primitive Basis]
\label{thm:fpb_supp}
For every imaging forward model $H$ in the designable class $\mathcal{C}_{\mathrm{tier}}$, there exists a well-formed typed DAG $G = (V, E, \tau)$ whose node types are drawn from the canonical primitive library $\mathcal{B} = \{P, M, \Pi, F, C, \Sigma, D, S, W, R, \Lambda\}$ (11 elements) such that $G$ is an $\eps$-approximate representation of $H$ with $\eps = 0.01$.
\end{theorem}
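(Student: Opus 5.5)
The plan is to prove Theorem~\ref{thm:fpb_supp} constructively: define the designable class $\mathcal{C}_{\mathrm{tier}}$ precisely enough that every member has a finite physical factorisation, then map each factor of that factorisation to a primitive in $\mathcal{B}$ and control the accumulated error. I take $\mathcal{C}_{\mathrm{tier}}$ to be the forward models satisfying Definition~\ref{def:scope}(ii)--(iv). Such a model is a finite composition (with fan-in and fan-out, hence a DAG) of three kinds of physical stages. \emph{Source/encoding stages} are illumination patterns, masks, coils and excitations. \emph{Carrier-transport stages} are free-space or acoustic propagation, scattering, line integrals, dispersion and Fourier encoding. \emph{Readout stages} are integration over time, wavelength or depth, spatial or $k$-space sampling, and the detector response. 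This factorisation is the starting point; I would justify it by the carrier-family restriction, since each of the five carriers obeys a linear transport equation at the tier considered, up to a final nonlinear detection step.

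Next I will build a type map from stages to primitives. Multiplicative encodings go to $M$. Shift-invariant kernels go to $C$; a shift-variant kernel at Tier~2 is handled as $C$ with a parameterised PSF, as with $\Phi_z$ in Table~\ref{tab:cross}. Wave transport goes to $P$, line integrals to $\Pi$, Fourier encoding to $F$, wavelength- or time-dependent shear to $W$, and volume or multiple scattering to $R$. Summation over a non-imaged axis goes to $\Sigma$, subsampling to $S$, the detector to $D$, and any remaining pointwise nonlinearity (saturation, phase wrapping, beam hardening) to $\Lambda$. Every type is used by some entry of Extended Data Table~\ref{tab:ext_registry}, and I would argue the types are exhaustive for the stages listed above. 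Well-formedness of $G$ is then checked directly: it has finitely many nodes by (iv), it is acyclic because physical causality orders the stages, and it is typed by construction.

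The error step uses a telescoping bound. Let $H = H_k\circ\cdots\circ H_1$ and $G = G_k\circ\cdots\circ G_1$, with per-stage errors $\|H_j - G_j\|_{\text{op}} \le \eta_j\|H_j\|_{\text{op}}$. Expanding the difference of compositions gives
\[
\frac{\|H-G\|_{\text{op}}}{\|H\|_{\text{op}}} \le \kappa_H \sum_j \eta_j \prod_{i\neq j}(1+\eta_i),
\]
where $\kappa_H = \prod_j\|H_j\|_{\text{op}}/\|H\|_{\text{op}}$ absorbs norm loss in the composition. For nonlinear nodes I would replace operator norms by the Lipschitz constants from Assumption~A3 and Check~C4. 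Since depth is at most $10$ by C3, it suffices to make each $\eta_j \le 10^{-3}/\kappa_H$, and each primitive family can be refined within its tier (finer discretisation, higher-order kernel) until this holds.

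I expect the main obstacle to be exhaustiveness of the type map. Nothing yet proves that every admissible physical stage falls into one of the eleven types, and $\kappa_H$ could be unbounded for badly conditioned compositions. My first approach would be to make exhaustiveness true by construction, defining $\mathcal{C}_{\mathrm{tier}}$ as the models whose stages lie in the tiered operator families implemented for $\mathcal{B}$. I would then import the saturation argument of Theorem~\ref{thm:fpb}, where the greedy cover stops at $K=11$, as the evidence that this definition loses no known modality. For $\kappa_H$, I would add a mild condition-bound assumption to the class.
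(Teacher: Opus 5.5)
Your skeleton matches the paper's constructive proof: factor $H$ into stages, map each stage to a primitive, then bound the composed error by telescoping. What is missing is the step that carries the content, namely showing that each physical stage is realised by a primitive with small, quantified error. The paper supplies this with six Primitive Realization Lemmas:
\begin{itemize}[nosep]
\item free-space propagation by $P$ in angular-spectrum form, with evanescent-truncation error $e^{-2\pi d/\lambda}$;
\item transmission exactly by $M$;
\item scattering by $R$ via the Born series, with $L$-th order remainder $(\|V\|/k)^{L+1}$;
\item Radon projection and Fourier encoding exactly by $\Pi$ and $F$;
\item the detection chain as at most five primitive operations, with crosstalk error $\ell_c^2/p^2$;
\item pointwise nonlinearities as five families realised by $\Lambda$.
\end{itemize}
Your substitute, that each primitive family ``can be refined within its tier'' until $\eta_j\le 10^{-3}/\kappa_H$, is asserted rather than proved. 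It also names the wrong mechanism. What makes a stage accurate is a physical regime condition ($d\gg\lambda$, $\|V\|/k<1$), not finer discretisation, and for $\|V\|/k\ge 1$ adding scattering orders does not converge at all.

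Your fallback, defining $\mathcal{C}_{\mathrm{tier}}$ as the models whose stages already lie in the implemented primitive families, makes the statement true by definition. Citing the greedy-cover saturation of Theorem~\ref{thm:fpb} as evidence then uses the FPB result itself to support its own hypothesis. Your justification of the factorisation (linear transport up to a final nonlinear detection step) also conflicts with the registry, e.g.\ proton therapy $\Lambda\to\Pi\to D$. Your Lipschitz telescoping can absorb mid-chain nonlinearities, but the argument for the factorisation itself has to change.

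The $\kappa_H=\prod_j\|H_j\|_{\text{op}}/\|H\|_{\text{op}}$ obstruction arises because you work with a relative bound, and your remedy adds a conditioning hypothesis the theorem does not contain. The paper bounds absolute error instead:
$\sup_{\|x\|\le 1}\|H(x)-H_{\mathrm{dag}}(x)\|\le\sum_k\eps_k\prod_{j>k}\gamma_j\le K\max_k\eps_k\,B^{K-1}$,
with stage gains $\gamma_j\le B=4$ and at most $K=5$ stages. It tightens this per modality because most stages are exact ($\eps_k=0$) or norm-preserving. No condition number appears, although reading the result in relative form tacitly assumes $\|H\|_{\text{op}}$ is normalised.

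Two smaller points:
\begin{itemize}[nosep]
\item For a DAG with fan-in, such as OCT's $P+P$, the telescoping sum runs over nodes. The relevant count is therefore C3's node cap $N\le 20$, not the depth cap of $10$.
\item Even for a depth-$10$ chain, $\eta_j=10^{-3}/\kappa_H$ gives $10^{-2}(1+10^{-3}/\kappa_H)^{9}>10^{-2}$, so the per-stage threshold needs some slack to meet the strict bound.
\end{itemize}
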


\textbf{Proof sketch.}
The proof is constructive.  Given any forward model $H = H_K \circ H_{K-1} \circ \cdots \circ H_1$, we decompose it into a sequence of physics stages and show each stage is representable by one or more FPB primitives via six \emph{Primitive Realization Lemmas}:

\textbf{Lemma 1 (Propagation Realization).}
Free-space wave propagation $H_k$ is represented by the Propagate primitive $P(d, \lambda)$ via the angular spectrum representation: $P = \mathcal{F}^{-1}[T_{\mathrm{exact}} \cdot \mathcal{F}[\cdot]]$.  Error from evanescent wave truncation: $\eps_{\mathrm{evan}} = e^{-2\pi d/\lambda}$, negligible for $d \gg \lambda$.

\textbf{Lemma 2 (Elastic Interaction Realization).}
Element-wise transmission multiplication is \emph{exactly} represented by the Modulate primitive $M(\mathbf{m})$: $\norm{H_k - M(\mathbf{m})} = 0$.

\textbf{Lemma 3 (Scattering Realization).}
Under the first Born approximation, the scattered field is a linear functional representable by the Scatter primitive $R$.  Single-scattering representation is exact; $L$-th order multiple scattering error: $\eps_{\mathrm{scat}}^{(L)} \leq (\norm{V}/k)^{L+1}$ (geometric convergence).

\textbf{Lemma 4 (Encoding-Projection Realization).}
Line-integral projection (CT) is exactly the Radon transform $\Pi(\theta)$; Fourier encoding (MRI) is exactly $F(\mathbf{k})$.  Both are exact: $\eps = 0$.

\textbf{Lemma 5 (Detection-Readout Realization).}
The detection chain decomposes into $\leq$5 sequential operations, each representable by a primitive.  Detector PSF error: $\eps_{\mathrm{PSF}} \leq \ell_c^2/p^2$ (crosstalk length $\ell_c$, pixel pitch $p$).  Total: $\eps_{\mathrm{det}} < 0.01$ for all 39 validated modalities.

\textbf{Lemma 6 (Nonlinear-Stage Realization).}
All pointwise physics nonlinearities belong to 5 structural categories (exponential, logarithmic, phase wrapping, polynomial, saturation), each representable by the Transform primitive $\Lambda(f, \boldsymbol{\theta})$.

\textbf{Error composition bound.}
Let $\gamma_k = \norm{H_k}$ (linear) or $\mathrm{Lip}(H_k)$ (nonlinear).  By Lipschitz composition:
\[
\sup_{\norm{\mathbf{x}} \leq 1} \norm{H(\mathbf{x}) - H_{\mathrm{dag}}(\mathbf{x})} \leq \sum_{k=1}^{K} \eps_k \prod_{j=k+1}^{K} \gamma_j \leq K \cdot \max_k(\eps_k) \cdot B^{K-1}
\]
With empirical maximum $K{=}5$ and $B{=}4$: requires $\max_k(\eps_k) \leq 3.9 \times 10^{-5}$, satisfied by all 6 lemmas.  Per-modality tightening exploits the fact that most $\eps_k = 0$ (elastic, encoding stages) and most primitives are norm-preserving.

\textbf{Necessity.}
All 11 primitives are necessary: for each primitive $B_i$, there exists a witness modality whose representation error exceeds 0.01 when $B_i$ is removed.  Witnesses: $P$ $\leftarrow$ ptychography, $M$ $\leftarrow$ CASSI, $\Pi$ $\leftarrow$ CT, $F$ $\leftarrow$ MRI, $C$ $\leftarrow$ lensless, $\Sigma$ $\leftarrow$ SPC, $D$ $\leftarrow$ all, $S$ $\leftarrow$ MRI, $W$ $\leftarrow$ CASSI, $R$ $\leftarrow$ Compton ($\eps = 0.34$ without), $\Lambda$ $\leftarrow$ polychromatic CT.

\textbf{Complexity bound.}
Node count: $|V| \leq 20$ (empirical max: 5).  Depth: $\mathrm{depth}(G) \leq 10$ (empirical max: 5).

\subsection{S1.2. Triad Decomposition Theorem}

\begin{theorem}[Triad Decomposition]
\label{thm:triad_supp}
Let $H \in \mathcal{C}_{\mathrm{tier}}$ with measurement $\mathbf{y} = H\mathbf{x} + \mathbf{n}$ and reconstruction using nominal operator $H_{\mathrm{nom}}$.  The reconstruction MSE decomposes as:
\[
\mathrm{MSE} \leq \mathrm{MSE}^{(G_1)} + \mathrm{MSE}^{(G_2)} + \mathrm{MSE}^{(G_3)}
\]
where $\mathrm{MSE}^{(G_1)}$ is the null-space loss (information-theoretic floor), $\mathrm{MSE}^{(G_2)}$ is the noise-floor term (carrier budget), and $\mathrm{MSE}^{(G_3)}$ is the mismatch-induced term (model fidelity).  This decomposition is diagnostically complete: every reconstruction failure is attributable to at least one gate.
\end{theorem}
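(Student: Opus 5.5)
The plan is to split the error at the idealised reconstructions and bound each piece with a standard tool. Let $\mathbf{x}^\star$ be the true object and $\hat{\mathbf{x}}$ the reconstruction, both defined by the same regularised objective with convex $R$ and $\lambda>0$ as in Assumption A2. The gate terms are then read off as follows: Gate~3 from $H$ versus $H_{\mathrm{nom}}$, Gate~2 from noisy versus noiseless data, and Gate~1 from the oracle versus the truth. Concretely, let $\mathbf{x}_{\mathrm{or}}$ be the minimiser with the true $H$ and noiseless data $H\mathbf{x}^\star$. Let $\mathbf{x}_0$ be the minimiser with $H_{\mathrm{nom}}$ and the same noiseless data. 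Write
\[
\hat{\mathbf{x}}-\mathbf{x}^\star=(\hat{\mathbf{x}}-\mathbf{x}_0)+(\mathbf{x}_0-\mathbf{x}_{\mathrm{or}})+(\mathbf{x}_{\mathrm{or}}-\mathbf{x}^\star).
\]
Squaring and applying $\|a+b+c\|^2\le 3(\|a\|^2+\|b\|^2+\|c\|^2)$ gives the claimed sum. The constant $3$ is absorbed into the definitions of $\mathrm{MSE}^{(G_i)}$. Alternatively, one states the bound for the root-MSE and uses the plain triangle inequality.

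\textbf{Noise term.} By $\mu$-strong convexity of the nominal objective, the minimiser map is Lipschitz in the data $\mathbf{y}$. Comparing the optimality conditions at $\hat{\mathbf{x}}$ and $\mathbf{x}_0$, the two objectives differ only in their linear terms $-2\langle H_{\mathrm{nom}}^T\mathbf{y},\mathbf{x}\rangle$. This gives $\|\hat{\mathbf{x}}-\mathbf{x}_0\|\le 2\|H_{\mathrm{nom}}\|\,\|\mathbf{n}\|/\mu$. Taking expectations yields the carrier-budget term in $\sigma_n^2$. In the unregularised limit it reduces to the $\kappa^2\sigma_n^2/m$ form of S2.

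\textbf{Mismatch term.} This is exactly the reconstruction stability bound~\eqref{eq:stability} applied with fixed data. It gives $\|\mathbf{x}_0-\mathbf{x}_{\mathrm{or}}\|\le (2B/\mu)\|H_{\mathrm{nom}}-H\|_{\mathrm{op}}$, using Assumption A4 for the bound $B$. One can then pass to calibration parameters through Assumption A3 or Theorem~\ref{thm:calibration}.

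\textbf{Null-space term.} Split $\mathbf{x}^\star$ into its components in $\mathrm{range}(H^T)$ and $\ker H$. The oracle sees only the range component, so $\|\mathbf{x}_{\mathrm{or}}-\mathbf{x}^\star\|^2$ is bounded by the null-space energy plus the regularisation bias. Under sparsity, Theorem~\ref{thm:compression} controls this term through $\gamma$ and $\|\mathbf{x}-\mathbf{x}_s\|$.

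The main obstacle is the completeness claim, that every failure is attributable to a gate. I would prove it contrapositively. If all three terms vanish, then $\mathbf{n}=0$, $H_{\mathrm{nom}}=H$, and the oracle is exact on the data. It follows that $\hat{\mathbf{x}}=\mathbf{x}^\star$, so any nonzero MSE forces at least one gate term to be positive. This only makes sense relative to the chosen regulariser, since regularisation bias has to be booked under $G_1$. I would state that convention explicitly.
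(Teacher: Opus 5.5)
Your proposal is correct and follows the paper's own proof sketch essentially step for step. The paper splits $\hat{\mathbf{x}}-\mathbf{x}$ at the same two intermediate reconstructions (nominal operator with noiseless data, true operator with noiseless data), assigns the three differences to $G_2$ (noise amplification), $G_3$ (via the stability bound~\eqref{eq:stability}) and $G_1$ (null-space/under-sampling loss), and treats completeness as exhaustiveness of these three mechanisms. Your explicit factor of $3$ (or root-MSE triangle inequality) and your contrapositive completeness argument make precise two points that the paper's sketch leaves implicit: it sums the squared norms with no constant and never controls the cross terms, and it only asserts exhaustiveness.
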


\textbf{Gate 1 (Recoverability / Compression Bound).}
For any estimator $\hat{\mathbf{x}}(\mathbf{y})$, the minimum achievable MSE satisfies:
\[
\mathrm{MSE}_{\min}^{(G_1)} \geq \frac{1}{n}\sum_{i=1}^{n-m} \sigma_i^2(\mathbf{x})
\]
where $\sigma_i^2(\mathbf{x})$ is the variance of $\mathbf{x}$ along the $i$-th null-space direction of $H$, $m$ is the number of measurements, and $n$ is the signal dimension.  The corresponding PSNR ceiling is:
\[
\mathrm{PSNR}_{\max}^{(G_1)} = 10\log_{10}\!\left(\frac{\norm{\mathbf{x}}_\infty^2}{\mathrm{MSE}_{\min}^{(G_1)}}\right)
\]
No algorithm can exceed this ceiling without additional priors.

\textbf{Gate 2 (Carrier Budget / Noise Bound).}
Under matched forward model and Gaussian noise $\mathbf{n} \sim \mathcal{N}(0, \sigma_n^2 \mathbf{I})$:
\[
\mathrm{PSNR}_{\max}^{(G_2)} = 10\log_{10}(\mathrm{SNR}) + C_{\mathcal{M}}, \quad C_{\mathcal{M}} = 10\log_{10}\!\left(\frac{n\norm{\mathbf{x}}_\infty^2}{\norm{\mathbf{x}}^2} \cdot \kappa^{-2}(H)\right)
\]
where $\kappa(H)$ is the condition number of $H$ restricted to its column space.  No algorithm can exceed this ceiling regardless of computational budget.

\textbf{Gate 3 (Operator Mismatch / Calibration Sensitivity).}
For small perturbation $\delta\boldsymbol{\theta}$ around true parameters $\boldsymbol{\theta}^*$:
\[
\Delta\mathrm{PSNR} \approx -\frac{10}{\ln 10} \frac{\delta\boldsymbol{\theta}^\top \mathbf{J}^\top \mathbf{J}\, \delta\boldsymbol{\theta}}{\mathrm{MSE}_0}
\]
where $\mathbf{J} = \partial(H_{\boldsymbol{\theta}}\mathbf{x})/\partial\boldsymbol{\theta}\big|_{\boldsymbol{\theta}^*}$ is the parameter Jacobian and $\mathrm{MSE}_0$ is the noise-only MSE at $\boldsymbol{\theta}^*$.

\textbf{Gate 3 dominance condition.}
For instruments operating above Gates~1 and~2 floors, Gate~3 becomes the binding constraint when $\norm{\delta\boldsymbol{\theta}}_{\mathbf{J}^\top\mathbf{J}} > (\sigma_n / \norm{\mathbf{x}}_\infty) \cdot \kappa(H)$.  Empirically, Gate~3 dominates in all 12 validated modalities across 5 carrier families: CASSI shows 13.98\,dB degradation from sub-pixel mask shift; ptychography shows 42.06\,dB sensitivity to probe mismatch.

\textbf{Recovery ratio.}
$\rho = (\mathrm{PSNR}_{\mathrm{achieved}} - \mathrm{PSNR}_{\mathrm{mismatch}}) / (\mathrm{PSNR}_{\mathrm{ideal}} - \mathrm{PSNR}_{\mathrm{mismatch}})$, bounded in $[0, 1]$ under convex reconstruction loss.  Across validated modalities: $\rho \in [0.4, 0.9]$, mean 0.85.

\section{S2. Novel System Design Specifications}

The 10 novel system designs compose FPB primitives into chains not found in prior literature.  Each is specified via the 8-field \texttt{spec.md} format.  The notation $\Phi_z$ denotes depth-parameterised convolution: the $C$ primitive applied with a depth-dependent point spread function $\Phi(z)$, generated by wave propagation through a random phase diffuser with defocus modulation.

\subsection{S2.1. 2D Systems}

\textbf{Lensless imaging} ($C \to D$, compression 1:1).
A thin random phase mask creates a caustic PSF with broad spatial-frequency support.  Reconstruction inverts the convolution via ADMM+TV.  PSNR: 43.7\,dB, SSIM: 0.984.

\begin{verbatim}
modality: lensless_imaging
carrier: photon
geometry: single_shot, 128x128
object: 128x128 2D image
forward_model: Convolve(C, psf=phase_mask) -> Detect(D)
noise: Poisson I_0=5000 + Gaussian sigma=3
target: PSNR >= 40 dB
system_elements:
  source: broadband LED
  optics: random phase mask (feature_scale=2.5)
  detector: CMOS 128x128
\end{verbatim}

\textbf{SIM} ($M \to C \to D$, super-resolution).
Structured illumination modulates the sample with sinusoidal patterns, shifting high-frequency content into the passband.  Three pattern orientations at three phases yield 9 measurements for 2$\times$ resolution enhancement.  PSNR: 27.7\,dB.

\subsection{S2.2. 3D Systems (8:1 compression)}

\textbf{3D Lensless} ($\Phi_z \to \Sigma \to D$).
A random phase diffuser creates depth-dependent PSFs through defocus-modulated wave propagation: $\text{PSF}(z) = |\mathcal{F}\{\exp(i\phi_{\text{diffuser}} + i \cdot \text{defocus}(z) \cdot r^2)\}|^2$.  Each depth plane produces a unique speckle pattern; the 2D sensor measurement is the sum of all convolved depth planes.  This is the $C$ primitive instantiated per depth plane.  PSNR: 20.3\,dB at 8:1 compression.

\begin{verbatim}
modality: 3d_lensless
carrier: photon
geometry: single_shot, 128x128, n_depths=8
object: 128x128x8 3D volume (sparse depth planes)
forward_model: Convolve_z(C, psf=diffuser(z)) -> Sum(Sigma) -> Detect(D)
noise: Poisson I_0=5000 + Gaussian sigma=3
target: PSNR >= 15 dB
system_elements:
  source: broadband LED
  optics: random phase diffuser (feature_scale=1.0, defocus_max=40)
  detector: CMOS 128x128
\end{verbatim}

\textbf{Temporal-coded lensless} ($M \to C \to \Sigma \to D$).
A binary DMD mask modulates each of 8 video frames before diffuser convolution and temporal integration during a single exposure.  The mask provides per-pixel temporal diversity.  PSNR: 31.6\,dB.

\textbf{Spectral lensless} ($M \to W \to C \to \Sigma \to D$).
A coded mask followed by prism dispersion separates 8 spectral bands before diffuser convolution and spectral integration.  PSNR: 36.3\,dB.

\subsection{S2.3. 4D Systems (16:1 compression)}

\textbf{4D Spectral-Depth} ($M \to W_\lambda \to \Phi_z \to \Sigma \to D$).
Combines coded mask, spectral dispersion, and diffuser depth encoding to recover a $(z, \lambda)$ datacube ($4 \times 4 = 16$ planes) from a single 2D shot.  PSNR: 23.9\,dB.

\textbf{4D Temporal DMD} ($M \to \Phi_z \to \Sigma \to D$).
DMD binary masks provide temporal coding; diffuser PSFs encode depth.  Recovers $(z, t)$ datacube at 16:1.  PSNR: 25.4\,dB.

\textbf{4D Temporal Streak} ($M \to W_t \to \Phi_z \to \Sigma \to D$).
Passive temporal dispersion (streak camera) replaces active DMD coding.  A fixed mask provides spatial diversity.  PSNR: 25.3\,dB.

\subsection{S2.4. 5D Systems (64:1 compression)}

\textbf{5D Full DMD} ($M \to W_\lambda \to \Phi_z \to \Sigma \to D$).
Full 5D recovery of $(x, y, z, \lambda, t)$ from a single shot at $4 \times 4 \times 4 = 64$:1 compression using coded mask + spectral dispersion + diffuser depth encoding.  PSNR: 29.9\,dB.

\textbf{5D Full Streak} ($M \to W_\lambda \to W_t \to \Phi_z \to \Sigma \to D$).
Replaces DMD with passive streak temporal dispersion.  The longest chain in the FPB framework (5 operators before detection).  PSNR: 29.9\,dB.

\subsection{S2.5. Design Patterns}

Three patterns emerge from the novel designs:

\begin{enumerate}[nosep]
\item \textbf{Active modulation (DMD) $>$ passive dispersion}: Binary masks provide per-pixel measurement diversity that continuous dispersion cannot match.  DMD-based temporal coding (31.6\,dB) outperforms streak-based (25.3\,dB) at equal compression.
\item \textbf{Algorithm choice is critical for compressive systems}: FISTA+TV or R-L dominate across novel modalities.  Inter-method PSNR CoV reaches 42--48\% for compressive systems vs.\ $<$6\% for well-conditioned ones.
\item \textbf{Graceful compression degradation}: 1:1 $\to$ 43.7\,dB; 8:1 $\to$ 20--36\,dB; 16:1 $\to$ 24--25\,dB; 64:1 $\to$ 29--30\,dB.
\end{enumerate}

\section{S3. Maskless Control Group}

To validate that spatial modulation ($M$) is a prerequisite for compressive dispersion-based imaging, we tested 9 configurations that combine depth encoding ($\Phi_z$) with spectral ($W_\lambda$) and/or temporal ($W_t$) dispersion but \emph{without} a coded mask.  All 9 configurations compile through the structural compiler (Gates~1--5) but are flagged by the Judge Agent (Gate~1, recoverability) as severely ill-conditioned:

\begin{itemize}[nosep]
\item Condition number $> 10^6$ for all maskless configurations
\item Reconstruction PSNR: 6--9\,dB (near noise floor)
\item Adding a binary coded mask ($M$) improves PSNR by 5--15\,dB at equal compression
\end{itemize}

This confirms a fundamental principle: dispersion operators ($W$) provide only global spectral/temporal diversity, while coded masks ($M$) provide per-pixel spatial diversity.  Both are necessary for well-conditioned compressive recovery.

\section{S4. Encoder Comparison for Depth Encoding}

Three physically realisable depth-encoding mechanisms were compared across 5 modulation combinations ($3 \times 5 = 15$ configurations total):

\textbf{Diffuser} ($\Phi_z^{\text{diff}}$): Random phase plate with defocus-dependent PSFs.  $\text{PSF}(z) = |\mathcal{F}\{\exp(i\phi_{\text{random}} + i \cdot \text{defocus}(z) \cdot r^2)\}|^2$.

\textbf{Multi-lens} ($L_z$): Microlens array producing depth-dependent parallax sub-pixel shifts.

\textbf{Meta-surface} ($\Psi_z$): Engineered phase pattern (cubic + trefoil + astigmatism aberrations) with defocus modulation.

\begin{table}[htbp]
\centering
\caption{Depth encoder comparison: best PSNR (dB) across 3 encoders $\times$ 5 modulation combinations.  $N_z{=}8$ depth planes, $N_\lambda{=}4$ spectral bands, $N_t{=}4$ temporal frames, image size $128{\times}128$.}
\small
\begin{tabular}{@{}lccccc@{}}
\toprule
\textbf{Encoder} & \textbf{3D only} & \textbf{+Mask} & \textbf{+M+$W_\lambda$} & \textbf{+M+$W_t$} & \textbf{+M+$W_\lambda$+$W_t$} \\
& (8:1) & (8:1) & (32:1) & (32:1) & (128:1) \\
\midrule
Diffuser & 17.0 & 20.3 & 24.6 & 26.9 & 38.1 \\
Multi-lens & 16.6 & \textbf{44.1} & \textbf{41.0} & \textbf{36.1} & \textbf{38.7} \\
Meta-surface & 16.6 & 16.2 & 24.6 & 25.8 & 38.0 \\
\bottomrule
\end{tabular}
\end{table}

\textbf{Key findings:}
\begin{itemize}[nosep]
\item Without coded mask: all three encoders perform similarly (16.6--17.0\,dB), limited by the ill-conditioning of depth-only recovery at 8:1 compression.
\item Multi-lens + mask dominates: parallax shifts are simple, well-conditioned operators (sub-pixel translations).  Combined with a binary mask's per-pixel diversity, the composite forward operator is nearly diagonal, yielding 44.1\,dB.
\item At maximal compression (128:1, 5D), all encoders converge to ${\sim}$38\,dB because the mask and dispersion operators dominate the measurement diversity.
\end{itemize}

\textbf{PSF diversity analysis} (mean inter-depth cross-correlation):
\begin{itemize}[nosep]
\item Diffuser: 0.29 (moderate diversity, structured speckle)
\item Multi-lens: 0.17 (lowest correlation, shift-based diversity)
\item Meta-surface: 0.46 (highest correlation, engineered but similar patterns)
\end{itemize}

\section{S5. Reconstruction Details}

\subsection{S5.1. Algorithms}

Five reconstruction algorithms were evaluated for each novel modality:

\textbf{Wiener deconvolution}: Frequency-domain least-squares with noise regularisation.  Fastest ($<$0.1\,s) but lowest quality for compressive systems.

\textbf{GAP-TV}: Generalised alternating projection with total variation regularisation~\cite{meng2020gap}.  Alternates between data projection and TV denoising.

\textbf{FISTA+TV}: Fast iterative shrinkage-thresholding with TV proximal operator~\cite{beck2009fista}.  Nesterov-accelerated gradient descent with $O(1/k^2)$ convergence.

\textbf{ADMM+TV}: Alternating direction method of multipliers with TV splitting~\cite{boyd2011admm}.  Variable splitting separates data fidelity from regularisation.

\textbf{Richardson-Lucy (R-L)}: Maximum-likelihood deconvolution for Poisson noise.  Multiplicative updates preserve non-negativity.

\subsection{S5.2. Forward Model Implementation}

All forward models are implemented as linear operators with explicit adjoint computation.  The measurement for a multi-dimensional modality with depth encoding, mask, and dispersion is:
\[
y = \sum_{z,\lambda,t} D\left[ \text{shift}_{(\Delta x_\lambda, \Delta y_t)} \left( M \odot \text{conv}(x_{z,\lambda,t}, \text{PSF}(z)) \right) \right] + n
\]
where $D$ is detection, $M$ is the binary mask, $\text{conv}$ denotes convolution with depth-dependent PSF, $\text{shift}$ implements spectral/temporal dispersion as lateral shifts, and $n$ is Poisson + Gaussian noise.

\section{S6. LLM Implementation Details}

\subsection{S6.1. Model and Prompting}

All three agents (Plan, Judge, Execute) use Claude Sonnet~4 (Anthropic, \texttt{claude-sonnet-4-\allowbreak 20250514}) as the backbone LLM.  The system prompt provides each agent with:
\begin{itemize}[nosep]
\item The 11 FPB primitives with typed signatures and physical semantics
\item The 173-modality registry (modality name, carrier, canonical chain, typical parameters)
\item The \texttt{spec.md} schema with field descriptions and 3 worked examples (CT, MRI, CASSI)
\item Agent-specific role instructions (Plan: generate specification; Judge: validate via Triad gates; Execute: select algorithm and predict quality)
\end{itemize}

Temperature is set to 0 for reproducibility.  The Plan Agent uses chain-of-thought prompting: it first identifies the carrier family, then selects primitives left-to-right along the physical signal path, then fills in parameters from the registry or physical constraints.  The Judge Agent applies deterministic gate checks (no LLM inference for Gates~1--4); only Gate~5 (cost estimation) and Gate~6 (FPB fidelity) use LLM reasoning.

\subsection{S6.2. Reproducibility and Robustness}

Three \texttt{spec.md} files (CT, MRI, CASSI) were independently generated 5 times each; all 15 specifications were identical, yielding PSNR within $\pm 0.3$\,dB.  The 4\% misspecification rate (7/173 modalities) occurs exclusively for modalities with ambiguous natural-language descriptions (e.g., ``4D-STEM'' could refer to scanning or convergent-beam modes).

\textbf{LLM dependency is a limitation.}  The agents have been tested only with Claude Sonnet~4.  Because 4 of 6 Judge gates are deterministic (no LLM inference), and the Execute Agent's algorithm selection is from a fixed catalog, the LLM dependency is concentrated in the Plan Agent's specification generation.  Cross-LLM robustness testing is future work; we expect that any LLM capable of structured JSON output and physics reasoning would produce comparable results for well-documented modalities, but novel or ambiguous modalities may show higher variance.

\end{document}